\DeclareMathOperator*{\argmin}{argmin}
\DeclareMathOperator*{\vecteur}{vec}
\DeclareMathOperator*{\KPSVD}{KPSVD}
\newtheorem{theorem}{Theorem}[section]
\newtheorem{prop}{Proposition}[section]
\newcommand\independent{\protect\mathpalette{\protect\independenT}{\perp}}
\def\independenT#1#2{\mathrel{\rlap{$#1#2$}\mkern2mu{#1#2}}}
\renewcommand{\cite}[1]{\citep{#1}}
\newcommand{\indexdata}{t}
\newcommand{\rearrange}{\mathcal{Z}}
\newcommand{\Aat}{\mathtt{A}}
\newcommand{\Ggt}{\mathtt{G}}
\title{Efficient Approximations of the Fisher Matrix in Neural Networks using Kronecker Product Singular Value Decomposition}
\author[1,2,*]{Abdoulaye Koroko}
\author[1]{Ani Anciaux-Sedrakian}
\author[1]{Ibtihel Ben Gharbia}
\author[3]{Valérie Garès}
\author[3]{Mounir Haddou}
\author[1]{Quang Huy Tran}
\affil[1]{IFP Energies nouvelles, 1 et 4 avenue de Bois Préau, 92852 Rueil-Malmaison Cedex, France}
\affil[2]{CentraleSupélec – Université Paris-Saclay, 3 Rue Joliot Curie, 91190 Gif-sur-Yvette, France}
\affil[3]{Univ Rennes, INSA, CNRS, IRMAR - UMR 6625, F-35000 Rennes, France}
\affil[*]{Corresponding author: \texttt{abdoulaye.koroko@ifpen.fr}}
\begin{document}
\maketitle

\begin{abstract}
Several studies have shown the ability of natural gradient descent to minimize the objective function more efficiently than ordinary gradient descent based methods.  However, the bottleneck of this approach for training deep neural networks lies in the prohibitive cost of solving a large dense linear system corresponding to the Fisher Information Matrix (FIM) at each iteration. This has motivated various approximations of either the exact FIM or the empirical one. The most sophisticated of these is KFAC, 
%[Martens \& Grosse, Optimizing neural networks with Kronecker-factored approximate
%curvature , PMLR 37: 2408--2417, 2015],
which involves a Kronecker-factored block diagonal approximation of the FIM. With only a slight additional cost, a few improvements of KFAC from the standpoint of accuracy are proposed. The common feature of the four novel methods is that they rely on a direct minimization problem, the solution of which can be computed via the Kronecker product singular value decomposition technique. Experimental results on the three standard deep auto-encoder benchmarks showed that they provide more accurate approximations to the FIM. Furthermore, they outperform KFAC and state-of-the-art first-order methods in terms of optimization speed.
\end{abstract}

\section{Introduction}

In Deep Learning, the Stochastic Gradient Descent (SGD) method \citep{RobbinsMonro1951} and its variants are currently the prevailing methods for training neural networks. To solve the problem
$$\argmin_{\theta \in \mathbb{R}^p} h(\theta) := \frac{1}{n}\sum_{\indexdata=1}^n L(y_{\indexdata},f_{\theta}(x_{\indexdata})) ,$$ 
where $h$ denotes the empirical risk associated with the training data $\mathcal{T} = \{ (x_1,y_1),(x_2,y_2),\hdots (x_n,y_n) \}$ and the loss function $L$, the batch SGD method produces iterates
$$\theta_{k+1} = \theta_{k}-\alpha_{k}\nabla_\theta h({\mathcal{S}_k,\theta_{k}}),$$ 
where $\alpha_k>0$ stands for the learning rate and where
$$\nabla_\theta h(\mathcal{S}_k,\theta_{k})= \frac{1}{|\mathcal{S}_k|}
\sum_{ (x_{\indexdata},y_{\indexdata}) \in \mathcal{S}_k} \nabla_\theta L(y_{\indexdata},f_{\theta_k}(x_{\indexdata}))$$
is a batch approximation of the full gradient $\nabla_\theta h(\theta_k) = \frac{1}{n}\sum_{\indexdata=1}^n \nabla_\theta L(y_{\indexdata}, f_{\theta_k}(x_{\indexdata}))$
on a random subset $\mathcal{S}_{k} \subset \mathcal{T}$.

Despite its ease of implementation and great popularity in the machine learning community, the SGD method, like all other first-order methods, is known to have limited effectiveness (requires many iterations in order to converge or even simply diverges) for non-convex objective functions, 
%in the case where the \huy{objective} function is non-convex, 
as is the case in 
%the objective functions of
deep neural networks.
 
In classical optimization, second-order methods are known for their efficiency in terms of convergence speed compared to first-order methods. A second-order iteration reads
$$ \theta_{k+1} = \theta_k - \alpha_k [H(\theta_k)]^{-1}\nabla_\theta h(\theta_k),$$ 
where $H(\theta_k) \in \mathbb{R}^{p\times p}$ is the curvature matrix of $h$ at $\theta_k$. The matrix $H$ can be the Hessian matrix $\nabla^2_{\theta\theta} h$ as in the Newton-Raphson method,
%; this is not suitable in the non-convex configuration since in this situation the Hessian matrix is not necessarily positive semi-definite and therefore 
the drawback of which is that $-H^{-1}\nabla_\theta h$ is not guaranteed to be a descent direction \cite{NocedalWright06}. It is wiser to replace the Hessian matrix by a surrogate such as the Generalized Gauss-Newton matrix \citep{Schraudolph2002} or the Fisher Information Matrix (FIM) \citep{Amari1998}, which are always positive semi-definite. Unfortunately, second-order methods remain impractical for deep neural networks where the number of parameters can quickly become very large (tens of millions), making it impossible to compute and to store, let alone to invert $H$.
 
A first way to avoid assembling and storing the matrix $H$ is the inexact resolution of the linear system by Conjugate Gradient (CG), which requires only matrix-vector products. This \emph{Hessian-free} philosophy \citep{martens2010} is still expensive, since the CG must be run with a significant number of iterations before reaching an acceptable convergence. 
 
An alternative is to consider the direct inversion of a diagonal approximation to $H$, as in \citep{becker} for the Hessian matrix or in \citep{adagrad,rmsprop,KingmaBa2015} for the empirical FIM. The reader is referred to \citep{Martens2014,KBH2019} for the difference between the empirical and the exact FIM (both are estimators of the true FIM but the second one uses sampled outputs from the model distribution). Another approach is to use a low-rank approximation of the Hessian matrix such as BFGS \citep{broyden1970,fletcher1970,goldfard1970,shanno1970} or its low-memory version L-BFGS \citep{nocedal1989}, which is better suited to deep learning. Nevertheless, the trouble with diagonal and low-rank approximations is that they are very rough and therefore give rise to less efficient algorithms than a well-tuned SGD.
 
More advanced methods resort to a block-diagonal approximation of a curvature matrix. \citet{LRMB2008} and \citet{olivier2015} use respectively a block-diagonal approximation of the empirical and exact FIM where each block contains  the weights associated to a particular neuron. Based on early ideas in \citep{Heskes2000,PascanuBengio2013,PZK2014}, a new family of methods under the name of KFAC have recently emerged \cite{MartensGrosse2015,GrosseMartens2016,BaGrosseMartens2017,MBJ2018,GLBBV2018}. Thanks to a Kronecker-factored layer-wise block-diagonal approximation of the FIM, the KFAC methods have proven to be more powerful than a well-tuned SGD. Following similar lines of thought, \citet{BRB2017} and \citet{GRB2020} also proposed efficient approximations of respectively GGN and Hessian matrices for training Multi-Layer Perceptrons (MLP). 
 
%In this work, we pay particular attention to the KFAC method. We have investigated this method and found that 
The fundamental assumption on which KFAC hinges is the
%for the Kronecker product factorization to the blocks of the FIM is the strong assumption of 
independence between activations and pre-activation derivatives. We believe that this premise, the theoretical foundation of which is unclear, is at the root of a poor quality of the FIM approximation. This is why, in this work, we wish to put forward four Kronecker-factored block-diagonal approximations that aim at more accurately representing the FIM by removing this assumption. To this end, we minimize the Frobenius norm of the difference between the original matrix and a prescribed form for the approximation, which is achievable through the Kronecker product singular value decomposition. Tests carried out on the three standard deep auto-encoder benchmarks showed that our proposed methods outperform KFAC both in terms of FIM approximation quality and optimization speed of the objective function.

The  paper  is  organized  as  follows:  Section 2  introduces the natural gradient and KFAC methods. Section 3 proposes the above mentioned novel approximations.  In  Section 4, we present and comment several numerical experiments. Finally, the conclusion overviews the work undertaken in this research and outlines directions for future study.

%\label{submission}
\medskip
\section{Background and notation}
The notations used in this paper are fairly similar to those introduced in \cite{MartensGrosse2015}. We consider an $\ell$-layer feedforward neural network $f_{\theta}$ parametrized by
$$\theta = [\text{vec}(W_1)^T,\text{vec}(W_2)^T, \hdots, \text{vec}(W_{\ell})^T]^T \in\mathbb{R}^p ,$$
where $W_i \in\mathbb{R}^{d_i\times (d_{i-1}+1)}$ is the weights matrix associated to layer $i$ and ``vec'' is the operator that vectorizes a matrix by stacking its columns together. This network transforms an input $x = : a_0\in\mathbb{R}^{d_0}$ to an output $z=f_{\theta}(x)$ by the sequence
$$
s_i = W_i \Bar{a}_{i-1}, \quad a_i=\sigma_i(s_i), \quad \text{for}\; i \; \text{from} \; 1\; \text{to} \; \ell,
$$ 
terminated by $z := a_\ell \in \mathbb{R}^{d_\ell}$. Here, $\Bar{a}_{i-1} = (1, a_i^T)^T$ is the augmented activation vector (value 1 is used for the bias) and $\sigma_i$ the activation function at layer $i$. The number of neurons at layer $i$ is $d_i$ and the total number of parameters is $p= \sum_{i=1}^{\ell} d_i (d_{i-1} + 1)$.

For a given input-target pair $(x,y)$, the gradient of the loss $L(y,f_{\theta}(x))$ w.r.t to the weights is computed by the back-propagation algorithm \cite{LeCun1988}. For convenience, we adopt the shorthand notation $\mathcal{D}v = \nabla_v L$ for the derivative of $L$ w.r.t any variable $v$, as well as the special symbol $g_i = \mathcal{D}s_i$ for the preactivation derivative. Starting from $\mathcal{D}a_{\ell} = \partial_z L(y,z=a_\ell)$, we perform
$$
 g_i= \mathcal{D}a_i \odot \sigma'_i(s_i), \quad
 \mathcal{D}W_{i}= g_{i}\Bar{a}_{i-1}^T, \quad
 \mathcal{D}a_{i-1} = W_i^T g_i,
$$ 
for $i$ from $\ell$ to 1, where $\odot$ denotes the component-wise product. Finally, the gradient $\nabla_\theta L$ is retrieved as
$$
\mathcal{D}\theta = [\text{vec}(\mathcal{D}W_1)^T,\text{vec}(\mathcal{D}W_2)^T, \ldots, \text{vec}(\mathcal{D}W_{\ell})^T]^T .
$$
%Training the network consists in seeking for the best parameter $\theta$ that minimizes the empirical risk: $$h\left(\theta\right) = \frac{1}{n}\sum_{i=1}^n L\left(y_i,f_{\theta}(x_i)\right)$$.

\subsection{Natural Gradient Descent}
The loss function $L(y,z)$ 
%\huy{measuring the discrepancy} between the network prediction $z$ and the actual label $y$ 
is now assumed to take the form 
\[
L(y,z) = -\log(p(y|z))=-\log(p(y|f_{\theta}(x)))=-\log(p(y|x,\theta)),
\]
where $p(y|x,\theta)$ is the density function of the probability distribution $P_{y|x}(\theta)$ governing the output $y$ around the value $f_{\theta}(x)$ predicted by the network. Note that $P_{y|x}(\theta)$ is multivariate normal for the standard square loss function, multinomial for the cross-entropy one. %and in the case of any loss function $L(y,z)$, it is possible to choose $P_{y|x}(\theta)$ such that $P_{y|x}(\theta) \propto \exp\left({-L(y,z)}\right)$.
Then, the natural gradient descent method \cite{Amari1998} is defined as
$$\theta_{k+1} = \theta_k - \alpha_k \left[F(\theta_k)\right]^{-1}\nabla_{\theta} h(\theta_k),$$
where
$$
 F(\theta)= \mathbb{E}_{x\sim Q_x, y\sim P_{y|x}(\theta)}[\mathcal{D}\theta (\mathcal{D}\theta)^T]
$$ 
is the FIM associated to the network parameter $\theta$. The expectation is taken according to the distribution $Q_x$ of the input data $x$ and the conditional distribution $P_{y|x}(\theta)$ of the the network 's output prediction $y$. For brevity and without any risk of ambiguity, we will omit the subscripts for the expectation and write $\mathbb{E}$ instead of $\mathbb{E}_{x\sim Q_x, y\sim P_{y|x}}(\theta)$.

%While the ordinary gradient descent gives the steepest descent \huy{direction} in the  space of parameters,
The natural gradient method can be seen as the steepest descent method in the space of model's probability distributions with the metric induced by the Kullback-Leibler (KL) divergence \cite{AmariNagaoka2000}. Indeed, it can be shown that for some constant scaling factor $\lambda >0$,
$$
-\frac{1}{\lambda} [F(\theta)]^{-1}\nabla_\theta h(\theta) = \!\! \argmin_{d: \text{KL} [P_{y|x}(\theta) \, \| \, P_{y|x}(\theta+d) ]\,=\, c} \!\! h(\theta+d).
$$
The appealing property of the natural gradient $F^{-1}\nabla h$ is that it has an intrinsic geometric interpretation, regardless of the actual choice of parameters. A thorougher discussion can be found in \cite{Martens2014}. 

It follows from the definition of the FIM that
$$
F =  \mathbb{E} [\mathcal{D}\theta (\mathcal{D}\theta)^T ]
 =   \begin{bmatrix} 
    F_{1,1}  & \dots & F_{1,\ell} \\
    \vdots  & & \vdots \\
     F_{\ell,1}  & \dots & F_{\ell,\ell}
    \end{bmatrix}, 
$$
in which the block
$$ 
F_{i,j} =   \mathbb{E}[\text{vec}(\mathcal{D}W_i)\text{vec}(\mathcal{D}W_j)^T]
        =   \mathbb{E}[\Bar{a}_{i-1}\Bar{a}_{j-1}^T\otimes g_ig_j^T]
$$
is a $d_i(d_{i-1}+1) \times d_j(d_{j-1}+1)$ matrix. We recall that the Kronecker product $ A\otimes B$ between two matrices $A \in \mathbb{R}^{m_A\times n_A}$ and $B\in \mathbb{R}^{m_B\times n_B}$ is the $m_A m_B\times n_A n_B$ matrix
$$
A\otimes B =  \begin{bmatrix} 
    A_{1,1}B  & \dots & A_{1,n_A}B \\
    \vdots &  & \vdots \\
   \, A_{m_A,1}B & \dots   & A_{m_A,n_A}B \;
    \end{bmatrix} .
$$
The blocks of $F$ can be given the following meaning: $F_{i,i}$  contains second-order statistics of weight derivatives on layer $i$, while $F_{i,j},i\neq j$ represents correlation between weight derivatives of layers $i$ and $j$.

\subsection{KFAC method}\label{sub:kfac}
The Kronecker-factored approximate curvature (KFAC) method introduced by \cite{MartensGrosse2015} is grounded on two assumptions that provide a computationally efficient approximation of $F$. 

The first assumption is that $F_{i,j}=0$ for $i\neq j$. In other words, weight derivatives in two different layers are uncorrelated. This results in block-diagonal approximation
$$
F\approx \text{diag}(F_{1,1},F_{2,2},\hdots F_{\ell,\ell}) .
$$
%Since $\forall{i}, F_{i,i}\in \mathbb{R}^{(d_{i-1}+1)d_i\times (d_{i-1}+1)d_i}$ where $d_i$ and $d_{i-1}$ are the number of neurons in layers $i$ and $i-1$ respectively,
This first approximation is insufficient, insofar as the blocks of $F_{i,i}$ are very large for neural networks with high number of units in layers. A further approximation is in order.

The second assumption is that of independent activations and derivatives (IAD): activations  and pre-activation derivatives are independent.
i.e $\forall{i}$, $ a_{i-1} \independent g_i$. 
This allows each block $F_{i,i}$ to be factorized into a Kronecker product of two smaller matrices, i.e.,
 \begin{equation} \label{eq:1}
 \begin{aligned}
     F_{i,i} & = \mathbb{E}[\Bar{a}_{i-1}\Bar{a}_{i-1}^T\otimes g_ig_i^T]\\
             & \approx \mathbb{E}[\Bar{a}_{i-1}\Bar{a}_{i-1}^T]\otimes\mathbb{E}[ g_i g_i^T] \\ 
    & =: \Bar{A}_{i-1}^{\text{KFAC}} \otimes {G}_{i}^{\text{KFAC}},
    \end{aligned}
    \end{equation}
with $\Bar{A}_{i-1}^{\text{KFAC}} =\mathbb{E}[\Bar{a}_{i-1}\Bar{a}_{i-1}^T]  \in \mathbb{R}^{(d_{i-1}+1)\times (d_{i-1}+1) }$ and $G_{i}^{\text{KFAC}} =\mathbb{E}[ g_ig_i^T]  \in \mathbb{R}^{d_{i}\times d_{i} }$.
    
These two assumptions yield the KFAC approximation
$$
F_{\text{KFAC}} = \text{diag}(\Bar{A}_{0}^{\text{KFAC}}\otimes {G}_{1}^{\text{KFAC}},  \hdots,  \Bar{A}_{\ell-1}^{\text{KFAC}}\otimes {G}_{\ell}^{\text{KFAC}} ).
$$
KFAC has been extended to convolution neural networks (CNN) by \citet{GrosseMartens2016}. However, due to weight sharing in convolutional layers, it was necessary to add two extra assumptions regarding spatial homogeneity and spatially uncorrelated derivatives.

The decisive advantage of $F_{\text{KFAC}}$ is that it can be inverted in a very economical way. Indeed, owing to the properties 
$(A\otimes B)^{-1} = A^{-1} \otimes B^{-1}$ and $(A\otimes B)\text{vec}(X)= \text{vec}(BXA^T)$ of the Kronecker product, 
the approximate natural gradient $F_{\text{KFAC}}^{-1}\nabla h$ can be evaluated as
\begin{equation}\label{eq:precond}
    F_{\text{KFAC}}^{-1}\nabla h = \begin{bmatrix}
    \text{vec} (G_{1}^{-1}(\nabla_{W_1}h)\Bar{A}_{0}^{-1}) \\ \vdots \\ 
    \text{vec} (G_{\ell}^{-1}(\nabla_{W_{\ell}}h)\Bar{A}_{\ell-1}^{-1})
    \end{bmatrix} ,
\end{equation}
where the KFAC superscripts are dropped from now on to alleviate notations.
This drastically reduces computations and memory requirements, since we only need to store, invert and multiply the smaller matrices $\Bar{A}_{i-1}$'s and $G_i$'s.

In practice, because the curvature changes relatively slowly \cite{MartensGrosse2015}, the factors $(\Bar{A}_{i-1}, G_i)$ are computed at every $T_1$ iterations and their inverses at every $T_2$ iterations. Moreover, $(\Bar{A}_{i-1},G_i)$ are estimated using exponentially decaying moving average. At iteration $k$, let $(\Bar{A}_{i-1}^{\text{old}},G_i^{\text{old}})$ be the factors previously computed at iteration $k-T_1$ and $(\Bar{A}_{i-1}^{\text{new}}, G_i^{\text{new}})$ be those computed with the current mini-batch. Then, setting $\rho = \min(1-1/k,\alpha)$ with $\alpha \in [0,1]$, we have
\begin{alignat*}{2}
\Bar{A}_{i-1} &= \rho\Bar{A}_{i-1}^{\text{old}} & & + (1-\rho)\Bar{A}_{i-1}^{\text{new}},\\
 G_i &= \rho G_i^{\text{old}} & & +(1-\rho) G_i^{\text{new}}. 
\end{alignat*}

Another crucial ingredient of KFAC is the Tikhonov regularization to enforce invertibility of $F_{\text{KFAC}}$. The straightforward damping $F_{\text{KFAC}} + \lambda I$ deprives us of the possibility of applying the formula $(A\otimes B)^{-1}=A^{-1}\otimes B^{-1}$.
%or damping (adding $\lambda I$ to $F_{\text{KFAC}}$). 
%Since $F_{\text{KFAC}}$ is block-diagonal, this is equivalent to adding $\lambda I$ (with the corresponding shape) to each of its blocks. However this makes the use of the
%formula $(A\otimes B)^{-1}=A^{-1}\otimes B^{-1}$ impossible.
To overcome this issue, \citet{MartensGrosse2015} advocated the more judicious Kronecker product regularization
$$
\widetilde{F}_{i,i} = (\Bar{A}_{i-1}+\pi_i \lambda^{1/2} I) \otimes
(G_i+ \pi_i^{-1} \lambda^{1/2} I) 
$$
%to approximate $\Bar{A}_{i-1}\otimes G_i + \lambda I$ by  
%$\Bar{A}_{i-1}^{\lambda}\otimes G_i^{\lambda}$ with $\Bar{A}_{i-1}^{\lambda}=\Bar{A}_{i-1}+\pi_i\lambda^{1/2}$ and $G_i^{\lambda}=G_i+\frac{1}{\pi_i}\lambda^{1/2}$
where $\lambda > 0$ and
$$
\pi_i = \sqrt{\frac{\text{tr}(\Bar{A}_{i-1})/(d_{i-1}+1)}{\text{tr}(G_i)/d_i}} .
$$

\section{Four novel methods}
%We keep the first approximation made in KFAC which approaches $F$ by the block-diagonal matrix $\Tilde{F}$. As for the second approximation (IAD assumption), we believe that it hurts the quality of KFAC approximation and propose a series of novel factorizations to the blocks of $\Tilde{F}$ that remove this assumption.
While staying within the framework of the first assumption (block-diagonal approximation), we now design four new methods that break free from the second hypothesis (IAD) in order to achieve a better accuracy: KPSVD, Deflation, Lanczos-bidiagonalization and KFAC-corrected.

\subsection{KPSVD}\label{sec:kpsvd}
In our first method, called KPSVD, the factors $(\Bar{A}_{i-1},G_{i})$ are specified as the arguments of the best possible approximation of $F_{i,i}$ by a single Kronecker product. Thus,
\begin{align}
      (\Bar{A}_{i-1}, {G}_{i})  & = \argmin_{(R,S)} \|F_{i,i}-R\otimes S\|_{F}\label{eq:2}\\
                            & = \argmin_{(R,S)} \|\mathbb{E}[\Bar{a}_{i-1}\Bar{a}_{i-1}^T\otimes g_ig_i^T]-R\otimes S\|_F , \nonumber
\end{align}
where $\|\cdot\|_F$ denotes Frobenius norm. Although the minimization problem \eqref{eq:2} has already been introduced in abstract linear algebra by van Loan \cite{VanLoan2000,VanLoanPitsianis1993}, it has never been considered in the context of neural networks, at least to the best of our knowledge. Anyhow,  it can be solved at a low cost by means of the Kronecker product singular value decomposition technique \cite{VanLoan2000}. To write down the solution, we need the following notion. Let
$$
M =\begin{bmatrix} 
    M_{1,1} & \dots & M_{1,d} \\
     M_{2,1} & \dots & M_{2,d}\\
    \vdots  & & \vdots \\
     M_{d,1}  & \dots & M_{d,d} 
    \end{bmatrix} \in \mathbb{R}^{d'd\times d'd}
$$
be a uniform block matrix, that is, $M_{\mu,\nu} \in \mathbb{R}^{d'\times d'}$ for all $(\mu,\nu)\in \{1,\ldots d\}^2$. The {\em zigzag rearrangement} operator $\rearrange$ converts $M$ into the matrix
$$
\rearrange(M)=\begin{bmatrix}
    \text{vec}(M_{1,1})^T\\
    \vdots\\
    \text{vec}(M_{d,1})^T\\
    \vdots\\
    \text{vec}(M_{1,d})^T\\
    \vdots\\
    \, \text{vec}(M_{d,d})^T\,\\
    \end{bmatrix} \in \mathbb{R}^{d^2 \times (d')^2},
$$
by flattening out each block in a column-wise order and by transposing the resulting vector. This operator is to be applied to each $M=F_{i,i}$ with $d= d_{i-1}+1$ and $d'= d_i$.

\begin{theorem}\label{th:th1}
Any solution of \eqref{eq:2} is also a solution of the ordinary rank-1 matrix approximation problem
\begin{equation}\label{eq:ordrankone}
 (\vecteur(\Bar{A}_{i-1}^{\KPSVD}), \vecteur({G}_{i}^{\KPSVD}) )  = 
\argmin_{(R,S)} \|\rearrange(F_{i,i})-\vecteur(R)\vecteur(S)^T\|_{F} .
\end{equation}
\end{theorem}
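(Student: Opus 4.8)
The plan is to show that the rearrangement operator $\rearrange$ converts the Kronecker approximation problem \eqref{eq:2} into the rank-one approximation problem \eqref{eq:ordrankone} verbatim, by proving that the two objective functions are literally \emph{equal} as functions of the pair $(R,S)$. Once this is established, the two problems have identical $\argmin$ sets, so every solution of \eqref{eq:2} is a solution of \eqref{eq:ordrankone}, and conversely.

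The cornerstone is the algebraic identity
\[
\rearrange(R\otimes S) = \vecteur(R)\,\vecteur(S)^T
\]
valid for any $R\in\mathbb{R}^{d\times d}$ and $S\in\mathbb{R}^{d'\times d'}$. To establish it, I would compare the two sides one row at a time. By the block structure of the Kronecker product, the $(\mu,\nu)$-block of $R\otimes S$ equals $R_{\mu,\nu}S$, so $\vecteur$ of that block is $R_{\mu,\nu}\,\vecteur(S)$; hence the row of $\rearrange(R\otimes S)$ associated with that block is $R_{\mu,\nu}\,\vecteur(S)^T$. On the other side, the entry of $\vecteur(R)$ sitting in the same row position is exactly $R_{\mu,\nu}$, so the matching row of $\vecteur(R)\,\vecteur(S)^T$ is again $R_{\mu,\nu}\,\vecteur(S)^T$. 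Matching rows for all $(\mu,\nu)$ yields the identity.

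Next I would record two elementary facts about $\rearrange$: it is linear, and it merely reshuffles the entries of its argument into a new layout (the same scalars appear, only relocated), so it is a Frobenius isometry, $\|\rearrange(M)\|_F=\|M\|_F$ for every uniform block matrix $M$. Combining linearity, the isometry property, and the identity above gives
\[
\|F_{i,i}-R\otimes S\|_F = \|\rearrange(F_{i,i}-R\otimes S)\|_F = \|\rearrange(F_{i,i})-\vecteur(R)\,\vecteur(S)^T\|_F ,
\]
which is precisely the statement that the objective of \eqref{eq:2} coincides with that of \eqref{eq:ordrankone}. The minimizers therefore agree, and the right-hand problem is solved by the leading singular triple of $\rearrange(F_{i,i})$, delivering $\vecteur(\Bar{A}_{i-1}^{\KPSVD})$ and $\vecteur(G_i^{\KPSVD})$ as the dominant left and right singular vectors (up to scaling).

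The only delicate point, and the step I expect to demand the most care, is the index bookkeeping that aligns the zigzag ordering of the blocks $M_{\mu,\nu}$ with the column-major convention of $\vecteur(R)$. I must verify that the row index that $\rearrange$ assigns to block $(\mu,\nu)$, namely $\mu+(\nu-1)d$, is the same as the component index of $R_{\mu,\nu}$ inside $\vecteur(R)$; any discrepancy here (for instance a row-major versus column-major stacking) would permute the rows of one side relative to the other and destroy the clean identity. Once this convention is pinned down consistently, the rest of the argument is routine.
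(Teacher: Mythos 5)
Your proposal is correct and follows essentially the same route as the paper: both arguments reduce to the single identity $\|F_{i,i}-R\otimes S\|_F=\|\rearrange(F_{i,i})-\vecteur(R)\vecteur(S)^T\|_F$, which the paper proves by expanding the squared Frobenius norm block by block and which you obtain by the equivalent factorization into $\rearrange(R\otimes S)=\vecteur(R)\vecteur(S)^T$ plus linearity and entry-preservation of $\rearrange$. Your index bookkeeping also matches the paper's convention (blocks listed column-major, so row $\mu+(\nu-1)d$ of $\rearrange(M)$ pairs with entry $R_{\mu,\nu}$ of $\vecteur(R)$), so no gap remains.
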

\begin{proof}
    See appendix \ref{appendix:th1}.
\end{proof}

Problem \eqref{eq:ordrankone} is solved as follows. Let $U^T\rearrange(F_{i,i})V=\Sigma$ be the singular value decomposition (SVD) of $\rearrange(F_{i,i})$. Let $\sigma_1$ be the greatest singular value of $\rearrange(F_{i,i})$ and $(u_1, v_1)$ be the associated left and right singular vectors. A solution to \eqref{eq:ordrankone} is, 
$$
\Bar{A}_{i-1}^{\text{KPSVD}} = \sqrt{\sigma_1} \, \mathrm{MAT}(u_1), 
\quad
{G}_{i}^{\text{KPSVD}} = \sqrt{\sigma_1} \, \mathrm{MAT}(v_1),
$$
where ``MAT,'' the converse of ``vec,'' turns a vector into a matrix.
The question to be addressed now is how to compute $u_1$, $v_1$ and $\sigma_1$. We recommend the power SVD algorithm (see appendix \ref{appendix:svd}), which only requires the matrix-vector multiplications $\rearrange(F_{i,i})v$ and $\rearrange(F_{i,i})^T u$. These operations can be performed without explicitly forming $F_{i,i}$ or $\rearrange(F_{i,i})$, as elaborated on in the upcoming Proposition.

\begin{prop}\label{prop:1}
For all $u\in \mathbb{R}^{(d_{i-1}+1)^2}$ and $v \in \mathbb{R}^{d_{i}^2}$, 
\begin{align*}
\rearrange(F_{i,i})v & =  \mathbb{E}[\,g_i^T V g_i \,\mathrm{vec}(\Bar{a}_{i-1}\Bar{a}_{i-1}^T) \,]  ,\\
\rearrange(F_{i,i})^Tu & = \mathbb{E}[\,\Bar{a}_{i-1}^T U \Bar{a}_{i-1}\, \mathrm{vec}(g_ig_i^T) \,]  ,
\end{align*}
with $U = \mathrm{MAT}(u)$ and $V = \mathrm{MAT}(v)$.
\end{prop}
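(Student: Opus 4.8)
The plan is to reduce the whole statement to one rearrangement identity and then exploit the rank-one structure of the outer products $\bar a_{i-1}\bar a_{i-1}^T$ and $g_ig_i^T$. First I would record the elementary fact that for any $A\in\mathbb{R}^{d\times d}$ and $B\in\mathbb{R}^{d'\times d'}$ one has $\rearrange(A\otimes B)=\mathrm{vec}(A)\,\mathrm{vec}(B)^T$. This is immediate from the definition of $\rearrange$: the $(\mu,\nu)$ block of $A\otimes B$ equals $A_{\mu,\nu}B$, so its vectorization is $A_{\mu,\nu}\,\mathrm{vec}(B)$, and the zigzag ordering places this row at position $\mu+(\nu-1)d$, which is exactly the column-major index carrying $\mathrm{vec}(A)_{\mu+(\nu-1)d}=A_{\mu,\nu}$. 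Since this identity is already the backbone of Theorem~\ref{th:th1}, I would simply invoke it rather than reprove it.

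Next, because $\rearrange$ is linear and commutes with the (finite) expectation, applying it to $F_{i,i}=\mathbb{E}[\bar a_{i-1}\bar a_{i-1}^T\otimes g_ig_i^T]$ with $d=d_{i-1}+1$ and $d'=d_i$ gives
$$\rearrange(F_{i,i})=\mathbb{E}\big[\mathrm{vec}(\bar a_{i-1}\bar a_{i-1}^T)\,\mathrm{vec}(g_ig_i^T)^T\big].$$
I would then multiply on the right by $v$ (respectively multiply $\rearrange(F_{i,i})^T$ by $u$) and collapse the resulting scalar factor. Using $\mathrm{vec}(X)^T\mathrm{vec}(Y)=\mathrm{tr}(X^TY)$ together with the symmetry of the rank-one matrices, one gets $\mathrm{vec}(g_ig_i^T)^T v=\mathrm{tr}(g_ig_i^T V)=g_i^T V g_i$ and, symmetrically, $\mathrm{vec}(\bar a_{i-1}\bar a_{i-1}^T)^T u=\bar a_{i-1}^T U\bar a_{i-1}$, where $U=\mathrm{MAT}(u)$ and $V=\mathrm{MAT}(v)$. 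Substituting these scalars back inside the expectation yields precisely the two claimed formulas.

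The only genuinely delicate point is the index bookkeeping in the first step: one must verify that the zigzag traversal of the blocks coincides with the column-major index map of $\mathrm{vec}(A)$, namely that row $\mu+(\nu-1)d$ of $\rearrange(M)$ carries $\mathrm{vec}(M_{\mu,\nu})^T$, and that the within-block ordering matches that of $\mathrm{vec}(B)$. Once this correspondence is pinned down, the remainder is a routine application of linearity and the trace identity, with no convergence or measurability concerns since everything is finite-dimensional and the expectation is merely a finite average over the mini-batch.
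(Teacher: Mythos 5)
Your proof is correct and follows essentially the same route as the paper's: both reduce to the identity $\rearrange(A\otimes B)=\mathrm{vec}(A)\,\mathrm{vec}(B)^T$ (which the paper obtains by writing out the block rows $\Aat_{\mu,\nu}\mathrm{vec}(\Ggt)^T$ explicitly), commute $\rearrange$ with the expectation, and then identify the scalar $\mathrm{vec}(g_ig_i^T)^Tv$ as the quadratic form $g_i^TVg_i$. The only cosmetic difference is that you justify that last step via the trace identity $\mathrm{vec}(X)^T\mathrm{vec}(Y)=\mathrm{tr}(X^TY)$, whereas the paper uses the Kronecker identities $\mathrm{vec}(xy^T)=y\otimes x$ and $(A\otimes B)\mathrm{vec}(X)=\mathrm{vec}(BXA^T)$; both are equally valid.
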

\begin{proof}
    See appendix \ref{appendix:prop1}.
\end{proof}

\subsection*{Estimating $\rearrange(F_{i,i})v$ and $\rearrange(F_{i,i})^Tu$}
Let us consider a batch $\mathcal{B}=\left\{ (x_1,y_1),\hdots, (x_m,y_m) \right\}$ drawn from the training data $\mathcal{T}$. We recall that the expectation is taken with respect to both $Q_x$ (data distribution over inputs $x$) and $P_{y|x}(\theta)$ (predictive distribution of the network). To estimate $\rearrange\left(F_{i,i}\right)v$ and $\rearrange\left(F_{i,i}\right)^Tu$, we use the Monte-Carlo method as suggested by \citet{MartensGrosse2015}: we first compute the statistics $\Bar{a}_{i-1}$'s and $g_i$'s during an additional back-propagation performed using targets $y$'s sampled from  $P_{y|x}(\theta)$ and then set
\begin{align*}
\rearrange(F_{i,i})v & \approx \frac{1}{m} \sum_{\indexdata=1}^m  g_{i,\indexdata}^T V g_{i,\indexdata} \, \text{vec}(\Bar{a}_{i-1,\indexdata} \Bar{a}{}_{i-1,\indexdata}^T)  , \\
\rearrange(F_{i,i})^Tu  &\approx \frac{1}{m} \sum_{\indexdata=1}^m  \Bar{a}_{i-1,\indexdata}^T U \Bar{a}_{i-1,\indexdata} \, \text{vec}( g_{i,\indexdata} g_{i,\indexdata}^T),
\end{align*}
where the subscript t loops over the data points in the batch B.

So far, we have not paid attention to the symmetry of the matrices $(\Bar{A}_{i-1}^{\text{KPSVD}}, {G}_{i}^{\text{KPSVD}})$ in problem \eqref{eq:2}. It turns out that symmetry is automatic, while positive semi-definiteness occurs for some solutions to be selected.
\begin{prop}\label{prop:2}
All solutions $(\Bar{A}_{i-1}^{\KPSVD}, {G}_{i}^{\KPSVD})$ of problem \eqref{eq:2} are symmetric. Besides, we can select solutions for which these matrices are positive semi-definite.
\end{prop}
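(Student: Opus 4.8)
The plan is to combine the rank-one reformulation of Theorem~\ref{th:th1} with two symmetries that the rearranged matrix $Z:=\rearrange(F_{i,i})$ inherits from $F_{i,i}$, and then to read off positive semi-definiteness directly from the operator identities of Proposition~\ref{prop:1}. Writing $d=d_{i-1}+1$ and $d'=d_i$, and indexing the rows of $Z$ by the block position $(\mu,\nu)$ and its columns by the within-block position $(a,b)$, I would first observe that the corresponding entry of $Z$ equals $\mathbb{E}[(\Bar{a}_{i-1})_\mu(\Bar{a}_{i-1})_\nu (g_i)_a (g_i)_b]$, which is invariant under the swaps $\mu\leftrightarrow\nu$ and $a\leftrightarrow b$. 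Letting $K_d,K_{d'}$ be the commutation (perfect-shuffle) permutations characterised by $K_d\,\vecteur(X)=\vecteur(X^T)$, these two invariances are exactly $K_dZ=Z$ (rows fixed under transposition of the double index) and $ZK_{d'}=Z$ (the analogue for columns); each $K$ is an orthogonal involution.

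\emph{All solutions are symmetric.} By Theorem~\ref{th:th1}, any solution $(\Bar{A}_{i-1}^{\KPSVD},G_i^{\KPSVD})$ of \eqref{eq:2} gives a pair $(\vecteur(\Bar{A}_{i-1}^{\KPSVD}),\vecteur(G_i^{\KPSVD}))$ solving the rank-one problem \eqref{eq:ordrankone}; hence the first vector is proportional to a top \emph{left} singular vector $u$ of $Z$ and the second to the corresponding \emph{right} one $v$ (this remains true when $\sigma_1$ is degenerate, since every optimal rank-one factor must lie in the leading singular subspaces). Because $\sigma_1>0$, such a $u$ lies in the column space of $Z$; as $K_dZ=Z$ fixes every column of $Z$ it fixes $u$, so $K_du=u$, which by the defining property of $K_d$ is precisely $\MAT(u)^T=\MAT(u)$. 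Applying the same reasoning to $Z^T$ via $K_{d'}Z^T=Z^T$ shows $\MAT(v)$ is symmetric as well. Hence both factors are symmetric for every solution.

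\emph{A positive semi-definite solution exists.} I would then introduce the linear maps on symmetric matrices $\mathcal{G}(Y):=\mathbb{E}[(g_i^T Y g_i)\,\Bar{a}_{i-1}\Bar{a}_{i-1}^T]$ and $\mathcal{A}(W):=\mathbb{E}[(\Bar{a}_{i-1}^T W\Bar{a}_{i-1})\,g_ig_i^T]$, which by Proposition~\ref{prop:1} are the matrix-form actions of $Z$ and $Z^T$, namely $\MAT(Zv)=\mathcal{G}(\MAT(v))$ and $\MAT(Z^Tu)=\mathcal{A}(\MAT(u))$. A one-line computation gives $\langle\mathcal{G}(Y),W\rangle=\mathbb{E}[(g_i^T Y g_i)(\Bar{a}_{i-1}^T W\Bar{a}_{i-1})]=\langle Y,\mathcal{A}(W)\rangle$, so $\mathcal{A}=\mathcal{G}^{*}$ and $\mathcal{G}\circ\mathcal{A}$ is self-adjoint and positive semi-definite with largest eigenvalue $\sigma_1^{2}$. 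Since $\Bar{a}_{i-1}\Bar{a}_{i-1}^T\succeq0$ and $g_i^T Y g_i\ge0$ whenever $Y\succeq0$, the map $\mathcal{G}$ (and likewise $\mathcal{A}$) carries the PSD cone into itself, so $\mathcal{G}\circ\mathcal{A}$ is cone-preserving on the closed pointed solid cone of PSD matrices. By the Perron--Frobenius/Krein--Rutman theorem its spectral radius $\sigma_1^{2}$ is attained at an eigenvector $W_\star\succeq0$; setting $G_\star:=\mathcal{A}(W_\star)/\sigma_1\succeq0$ produces a top singular pair with $\mathcal{G}(G_\star)=\sigma_1 W_\star$, whence $(\sqrt{\sigma_1}\,W_\star,\sqrt{\sigma_1}\,G_\star)$ solves \eqref{eq:2} with both factors symmetric PSD.

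I expect the delicate points to be, on the one hand, justifying that \emph{every} optimal rank-one factor stays inside the leading singular subspace even when $\sigma_1$ is not simple (so that the $K_d$-invariance still forces symmetry), and on the other hand the invocation of the cone theorem to place the dominant eigenvector inside the PSD cone. The latter admits a constructive alternative that sidesteps Krein--Rutman: seeding the power-SVD iteration behind Proposition~\ref{prop:1} at $Y=I\succ0$ keeps all $\mathcal{A}$- and $\mathcal{G}$-iterates in the PSD cone, which is closed, so the limiting factors are automatically PSD, modulo a routine convergence and non-degeneracy check.
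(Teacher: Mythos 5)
Your proof is correct, and while the symmetry half is essentially the paper's argument in different clothing, the positive semi-definiteness half takes a genuinely different route. For symmetry, the paper writes $\sqrt{\sigma_1}\,\Bar{A}_{i-1}^{\KPSVD}=\mathbb{E}[(g_i^T\mathrm{MAT}(v_1)g_i)\,\Bar{a}_{i-1}\Bar{a}_{i-1}^T]$ via Proposition~\ref{prop:1} and observes that this is an expectation of symmetric matrices; your commutation-matrix identities $K_d\rearrange(F_{i,i})=\rearrange(F_{i,i})$ and $\rearrange(F_{i,i})K_{d'}=\rearrange(F_{i,i})$ encode exactly the same fact, namely that the range of $\rearrange(F_{i,i})$ (resp.\ of its transpose) consists of vectorized symmetric matrices — and you are in fact a bit more careful than the paper in arguing that \emph{every} minimizer of \eqref{eq:ordrankone}, even when $\sigma_1$ is degenerate, has its factors in these ranges. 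For PSD-ness the two arguments diverge: the paper follows Van Loan--Pitsianis (Theorem 5.8) with an elementary sign-flip argument — diagonalize the symmetric minimizers as $U^TDU$ and $V^TEV$, conjugate $F_{i,i}$ by $U\otimes V$, and use $C_{\omega,\omega}\ge 0$ to show that replacing $D,E$ by $|D|,|E|$ cannot increase the Frobenius error, so by minimality it yields another (now PSD) minimizer. You instead observe that $\mathcal{G}$ and $\mathcal{A}=\mathcal{G}^*$ preserve the PSD cone and invoke Perron--Frobenius/Krein--Rutman to place a dominant eigenvector of $\mathcal{G}\circ\mathcal{A}$ inside that cone. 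This is a heavier hammer and requires one small point you leave implicit — restricting to the symmetric subspace, where the PSD cone is proper and where (by your symmetry step) the spectral radius is still $\sigma_1^2$ — but it buys something the paper's argument does not: it explains structurally why the power-SVD iteration of Appendix~\ref{appendix:svd}, seeded with a PSD matrix, keeps all iterates in the PSD cone, which is directly relevant to the warm-started implementation. (Your constructive fallback does still need $\mathrm{vec}(I)$ to have a nonzero component in the dominant eigenspace; note that this itself follows from the existence of a PSD dominant eigenvector $W_\star$, since $\langle W_\star,I\rangle=\mathrm{tr}(W_\star)>0$, so the cone theorem is not entirely avoidable there.) Both proofs quietly assume $\sigma_1>0$, i.e.\ $F_{i,i}\neq 0$, which is harmless.
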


\begin{proof}
    See appendix \ref{appendix:prop2}.
\end{proof}

\subsection{Kronecker rank-2 approximation to $F_{i,i}$}
%Consider the same $A$ and $\rearrange(A)$ as in the theorem \ref{th:th1}. Let $\tilde{r}$ be the rank of $\rearrange(A)$ and $\rearrange(A)=U\Sigma V^T$ the SVD of $\rearrange(A)$ with $U=\left[u_1|u_2|\hdots|u_{MN}|\right]$, $V=\left[v_1|v_2|\hdots|v_{pq}|\right]$ and $\Sigma = \text{diag}\left(\sigma_1,\hdots,\sigma_{\tilde{r}},0,\hdots 0\right)$ with $\sigma_1\geq \sigma_2 \geq \hdots \geq \sigma_{\tilde{r}}$.
%From \cite{VanLoan2000}, $A = \sum_{k=1}^{\tilde{r}}\sigma_k U_k\otimes V_k$ with $U_k \in \mathbb{R}^{M\times N}$ and $V_k \in \mathbb{R}^{p\times q}$ are such that $u_k=\text{vec}\left(U_k\right)$ and $v_k=\text{vec}\left(V_k\right)$.

Since the KPSVD method of \S\ref{sec:kpsvd} is merely a Kronecker rank-1 approximation of $F_{i,i}$, it is most natural to look for higher order approximations. The two methods presented in this section are based on seeking a Kronecker rank-2 approximation $R\otimes S + P\otimes Q$ of $F_{i,i}$ that achieves
\begin{equation}\label{eq:rank2pb}
\min_{(R,S,P,Q)} \|F_{i,i} - (R\otimes S + P\otimes Q) \|_F .
\end{equation}
Again, the zigzag rearrangement operator $\rearrange$ enables us to reformulate \eqref{eq:rank2pb} as an ordinary rank-2 matrix approximation problem. To determine a solution of the latter, there are two techniques in practice: \emph{deflation} \cite{Saad} and \emph{Lanczos bi-diagonalization} \cite{Golub}.

\subsubsection{Deflation}
The rank-1 factors $(R,S)$ and the rank-2 factors $(P,Q)$ are computed successively, one after another:
\begin{enumerate}
    \item Apply the power SVD algorithm  to $\rearrange(F_{i,i})$  to compute  $(R,S)$ so as to minimize $\|F_{i,i} -  R\otimes S\|_F$. The solution is known to be $(R,S)= (\Bar{A}_{i-1}^{\text{KPSVD}}, G_i^{\text{KPSVD}})$.
    \item Let $\widehat{F}_{i,i} = F_{i,i} - R\otimes S$. Apply the power SVD algorithm to $\rearrange(\widehat{F}_{i,i})$ to compute $(P, Q)$ so as to minimize $\| \widehat{F}_{i,i} - P\otimes Q\|_F$.
    \item Set $F_{i,i}\approx R\otimes S + P\otimes Q$.
\end{enumerate}
In step 2, we need to calculate the matrix-vector products $\rearrange(\widehat{F}_{i,i})v$ and $\rearrange(\widehat{F}_{i,i} )^Tu$. These operations can be done efficiently without explicitly forming $\widehat{F}_{i,i}$ or $\rearrange(\widehat{F}_{i,i})$. Indeed,
\begin{align*}
\rearrange(\widehat{F}_{i,i})v & =  \rearrange(F_{i,i})v-\rearrange(R\otimes S)v ,\\
\rearrange(\widehat{F}_{i,i})^Tu & = \rearrange(F_{i,i})^Tu-\rearrange(R\otimes S)^Tu.
\end{align*}
On one hand, we know how compute $\rearrange(F_{i,i})v$ and $\rearrange(F_{i,i})^Tu$ from Proposition \ref{prop:1}. On the other hand, it is not difficult to show that
\begin{align*}
\rearrange(R\otimes S)v & = \left<\text{vec}(S),v\right>\text{vec}(R),\\
\rearrange(R\otimes S)^Tu & = \left<\text{vec}(R),u\right>\text{vec}(S), 
\end{align*}
where $\left<\cdot,\cdot\right>$ stands for the dot product.

\subsubsection{Lanczos bi-diagonalization}
In contrast to deflation, the Lanczos bi-diagonalization algorithm (see appendix \ref{appendix:lanczos}) computes $(R,S)$ and $(P,Q)$ at the same time. It does so by simultaneously computing the two largest singular values $\sigma_1\geq \sigma_2$ of $\rearrange(F_{i,i})$ with the associated singular vectors $(u_1,v_1)$ and $(u_2,v_2)$. Once these singular elements are determined, it remains to set
\begin{alignat*}{2}
R &= \sqrt{\sigma_1} \,\text{MAT}(u_1), &\quad S &=  \sqrt{\sigma_1}\, \text{MAT}(v_1),\\
P &= \sqrt{\sigma_2} \,\text{MAT}(u_2), &\quad Q &=  \sqrt{\sigma_2}\,\text{MAT}(v_2).
\end{alignat*}
Similarly to KPSVD, we only have to perform the matrix-vector multiplications  $\rearrange(F_{i,i})v$ and $\rearrange(F_{i,i})^Tu$ without forming and storing $F_{i,i}$ or $\rearrange(F_{i,i})$.

In pratice, it is advisable to implement the restarted version of the algorithm \cite{Saad}, which consists of three steps:
\begin{enumerate}
        \item \textbf{Start}: Choose an initial vector $q^{(0)}$ and a dimension $K$ for the Krylov subspace.
        \item \textbf{Iterate}: Perform Lanczos bidiagonalization algorithm (appendix \ref{appendix:lanczos}).
        \item \textbf{Restart}: Compute the desired singular vectors. If stopping criterion satisfied, stop. Else set $q^{(0)} = \text{linear combination of singular vectors}$ and go to 2. 
    \end{enumerate}

\subsection{KFAC-CORRECTED}
Another idea is to simply add an {\em ad hoc} correction to the KFAC approximation. Put another way, we consider
$$
F_{i,i} \approx \Bar{A}_{i-1}^{\text{KFAC}} \otimes {G}_{i}^{\text{KFAC}}
+ \Bar{A}_{i-1}^{\text{corr.}}\otimes {G}_{i}^{\text{corr.}},
$$
using the best possible correctors, that is,
\begin{equation}\label{eq:kfaccorrpb}
(\Bar{A}_{i-1}^{\text{corr.}}, {G}_{i}^{\text{corr.}}) = \\
\argmin_{(P,Q)} \| F_{i,i} - \Bar{A}_{i-1}^{\text{KFAC}} \otimes {G}_{i}^{\text{KFAC}}  - P\otimes Q\|_F .
\end{equation}
%We can write $F_{i,i} = \Bar{A}_{i-1}\otimes {G}_{i}+ \mathcal{E}_i$ where $\mathcal{E}_i = F_{i,i}-\Bar{A}_{i-1}\otimes {G}_{i}$ is the approximation error. 
%
%We propose to write $\mathcal{E}_i$ as $\mathcal{E}_i = \Bar{A}_{i-1}^{\text{corrected}}\otimes {G}_{i}^{\text{corrected}}$ using the SVD algorithm and then set: 
Again, the solution of \eqref{eq:kfaccorrpb} can be computed by applying the power SVD algorithm to the matrix $\rearrange(F_{i,i} - \Bar{A}_{i-1}^{\text{KFAC}} \otimes {G}_{i}^{\text{KFAC}} )$. The matrix-vector multiplications required can be done in the same way as in the \emph{deflation} method without explicitly forming and storing the matrices.

\section{Inversion of $A\otimes B+ C\otimes D$}\label{sec:inversion}
For each of the last three methods, we need to solve a linear system of the form $(A\otimes B+ C\otimes D)u=v$ in an efficient way. This is far from obvious, since due to the sum, the well-known and powerful identities $(A\otimes B)^{-1} = A^{-1}\otimes B^{-1}$ and  $(A\otimes B)^{-1}\text{vec}(X)=\text{vec}(B^{-1}XA^{-T})$ can no longer be applied.

There are many good methods to compute $u$, but the most appropriate for our problem is that of \citet{MartensGrosse2015}, since it takes advantage of symmetry and definiteness of the matrices. Below is a summary of the algorithm, the full details of which are in \cite{MartensGrosse2015}.
%It requires computations of SVD, of matrix square roots and matrix-matrix multiplications. The technique can be summarised by the following steps (See \cite{MartensGrosse2015} for the full description).
\begin{enumerate}
    \item Compute $A^{-1/2}$, $B^{-1/2}$ and the symmetric eigen/SVD-decompositions 
    \begin{align*}
    A^{-1/2}CA^{-1/2} &= E_1S_1E_1^T,\\
    B^{-1/2}DB^{-1/2} &= E_2S_2E_2^T,
    \end{align*}
    where $S_{1,2}$ are diagonal and $E_{1,2}$ are orthogonal.
    \item Set $K_1=A^{-1/2}E_1$, $K_2 = B^{-1/2}E_2$. Then,
    $$
    u=\text{vec}(K_2[(K_2^TVK_1)\oslash(\mathbf{1}\mathbf{1}^T+s_2s_1^T)]K_1^T),
    $$ 
    where $E\oslash F$ denotes the Hadamard or element-wise division of $E$ by $F$, $s_{1,2}=\text{diag}(S_{1,2})$, $\mathbf{1}$ vector of ones and $V = \text{MAT}(v)$. Note that $K_{1,2}$, $s_{1,2}$ can be stored and reused for different choices of $v$.
\end{enumerate}
Despite the numerous steps involved, the inversion of $A\otimes B+ C\otimes D$ is much cheaper than that of $F_{i,i}$. Indeed, if $d$ denotes the number of neurons the current layer, then the matrices $A,B,C,D$ are of size $d\times d$ each, and therefore the inversion of $A\otimes B+C\otimes D$ has $O(d^2)$ memory requirement and $O(d^3)$ computational cost. Meanwhile, since $F_{i,i}$ is a matrix size $d^2\times d^2$, its inversion requires $O(d^4)$ memory and $O(d^6)$ flops.

\section{Experiments}
We have evaluated our proposed methods as well as KFAC, SGD and ADAM on the three standard deep-auto-encoder problems used for benchmarking neural network optimization methods \cite{martens2010,sluskeretal2013,MartensGrosse2015,BRB2017}. The benchmarks consist of training three different auto-encoder architectures with CURVES, MNIST and FACE datasets respectively. See appendix \ref{appendix:data} for a complete description of the network architectures and datasets. In our experiments, all our proposed methods as well as KFAC use approximations of the exact FIM $F$. 
Experiments were performed with PyTorch framework \cite{pytorch} on supercomputer with Nividia Ampere A100 GPU and AMD Milan@2.45GHz CPU.

The precision value $\epsilon$ for power SVD and Lanczos bi-diagonalization algorithm was set to $10^{-6}$. Also for these two algorithms, we used a warm-start technique which means that the final results of the previous iteration are used as a starting point (instead of a random point) for the current iteration. This has resulted in a faster convergence.
In all experiments, the batch sizes used are $256$, $512$ and $1024$ for CURVES, MNIST and FACES datasets respectively.

We first evaluate the approximation qualities of the FIM and then report the results on performance of the optimization objective.

\subsection{Approximation qualities of the FIM}\label{sub:FIM}
We investigated how well our proposed methods and KFAC approximate blocks of the exact FIM. To do so, we computed for each of the problems the exact FIM and its different approximations of the $5$th layer of the network. For a fair comparison, the exact FIM as well as its different approximations were computed during the same optimization process with an independent optimizer (SGD or ADAM). We ran two independent tests with SGD and ADAM optimizers respectively and ended up with the same results. We therefore decided  to report only the results obtained with ADAM.
Let $F$ be the exact FIM of the $5$th layer of the network and $\hat{F}$ be any approximation to $F$ ($\hat{F}$ is in the form $A\otimes B$ for KFAC and KPSVD, and $R\otimes S+P\otimes Q$ for KFAC corrected, Deflation and Lanczos). We measured the following two types of error: 
\begin{itemize}
    \item \textbf{Error 1}: Frobenius norm error between $F$ and $\hat{F}$ : $\|F-\hat{F}\|_F / \|F\|_F$;
    \item \textbf{Error 2}: $\ell_2$ norm error between the spectra of $F$ and $\hat{F}$:   $\|\text{spec}(F)-\text{spec}(\hat{F})\|_2 / \|\text{spec}(F)\|_2$ where $\text{spec}(M)$ denotes the spectrum of $M$ and $\|\cdot\|_2$ is the $\ell_2$ norm.
\end{itemize}

Note that here the Fisher matrices were estimated without the exponentially decaying averaging
scheme which means that only the mini-batch at iteration $k$ is used to compute the Fisher matrices at this iteration.

\begin{figure*}[thbp]
\centering

\begin{tabular}{ccc}
      &\includegraphics[width=0.5\textwidth]{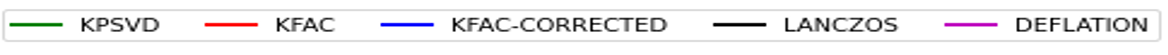} & \\
      \subfloat[CURVES]{\includegraphics[width=45mm]{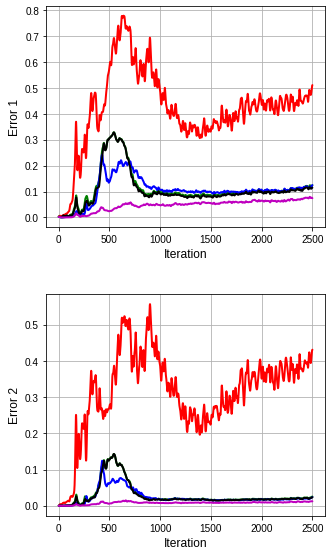}} & \hspace{-3.0cm}
      \subfloat[MNIST]{\includegraphics[width=45mm]{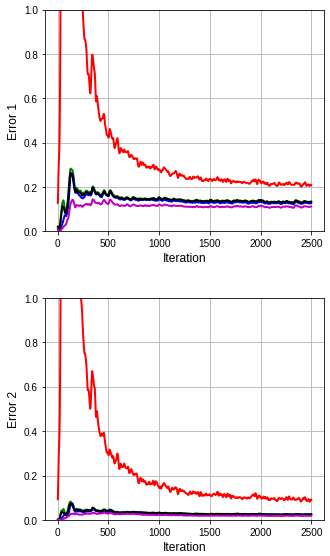}}& \hspace{-3.0cm}
      \subfloat[FACES]{\includegraphics[width=45mm]{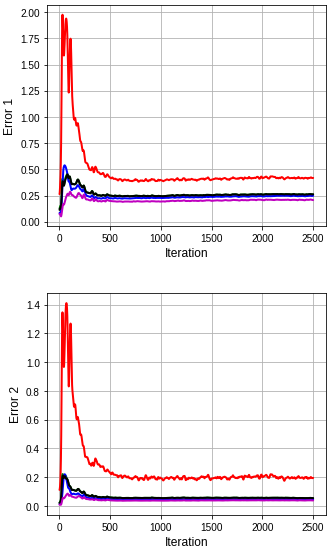}}
\end{tabular}

\caption{Comparison between FIM approximation qualities of our methods and KFAC. For each problem, at each training iteration of the network with ADAM optimizer, the exact FIM and its different approximations are computed for layer $5$ of the network. \textbf{Error 1} and \textbf{Error 2} described in subsection \ref{sub:FIM} are measured. For the sake of visual comparison between different methods, the display scale in the axis of ordinates was deliberately restricted to $[0,1]$ for the MNIST problem. It thus seems that the error curves for KFAC, whose peak amplitudes are about $6.5$, are truncated.}
\label{fig:fishers}
\end{figure*}

As we can see in Figure \ref{fig:fishers}, for each of the problems, the Deflation method gives the best approximation, followed by the other methods.  Although Deflation and Lanczos bi-diagonalization may appear as two implementations of the same idea (i.e., computing the two largest singular vectors), the former turns out to be more robust than the latter, in the sense that it converges much faster to the two dominant pairs and also produces a much smaller error. This accounts for the difference in performance between the two methods. The \textbf{Error 1}  and \textbf{Error 2} made by our different methods remain lower than those caused by KFAC throughout the optimization process. This suggests that our methods give a better approximation to the Fisher than KFAC, and that increasing the rank does improve the quality of approximation. One can go further in this direction if there is no prohibitive extra cost.

\subsection{Optimization performance}
We now consider the network optimization in each of the three problems. We have evaluated our methods against KFAC and the baselines (SGD and ADAM). Here the different approximations to the FIM were computed using the exponentially decaying technique as described in \S\ref{sub:kfac}. The decay factor $\alpha$ was set to $0.95$ as in \cite{MartensGrosse2015}.
Since the goal of KFAC as well as our methods is optimization performance rather than generalization, we performed Grid Search for each method and selected hyperparameters that gave a better reduction to the training loss. The learning rate $\eta$  and the damping parameter $\lambda$ are in range $\{10^{-1}, 10^{-2}, 10^{-3}, 10^{-4}, 3\cdot 10^{-1}, 3\cdot 10^{-2}, 3\cdot 10^{-3},  3\cdot 10^{-4}\}$,  and the clipping parameter $c$ belongs to $\{10^{-2}, 10^{-3}\}$ (see appendix \ref{appendix:clipping} for definition of $c$). Note that damping and clipping are used only in KFAC and our proposed methods. Update frequencies $T_1$ and $T_2$ were set to $100$. The momentum parameters were $\beta=0.9$ for SGD and $(\beta_1,\beta_2) = (0.9,0.999)$ for ADAM. 

\begin{figure*}[thbp]
\begin{tabular}{c}
      \includegraphics[width=1.0\textwidth]{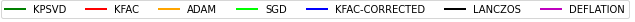} \\
      \subfloat[CURVES]{\includegraphics[width=180mm]{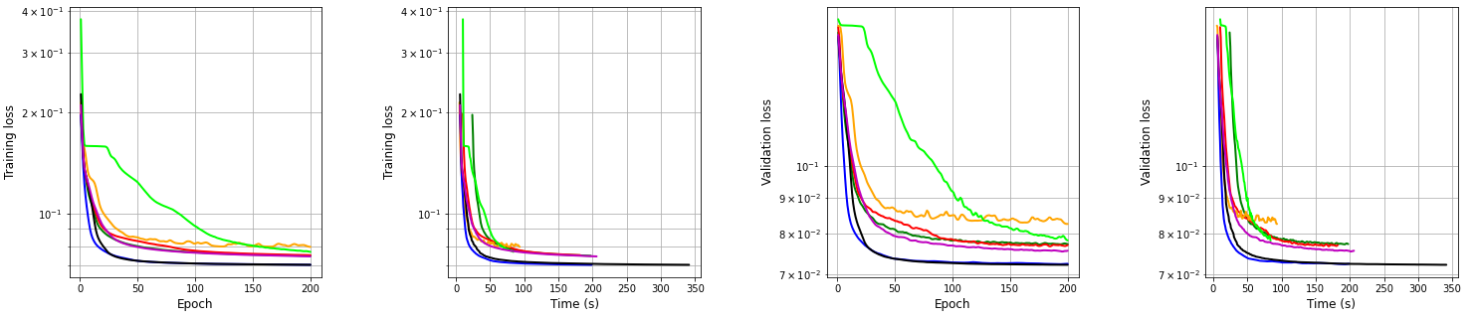}} \\
      \subfloat[MNIST]{\includegraphics[width=180mm]{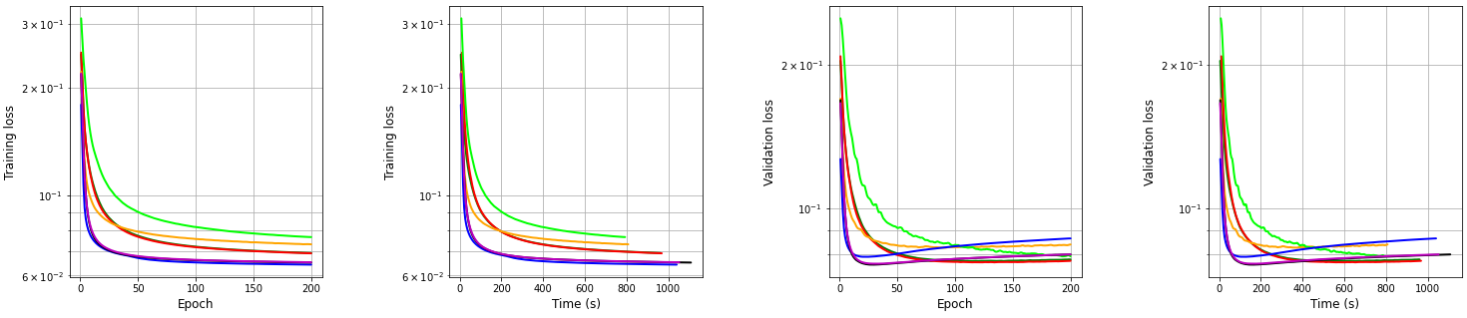}}\\
      \subfloat[FACES]{\includegraphics[width=180mm]{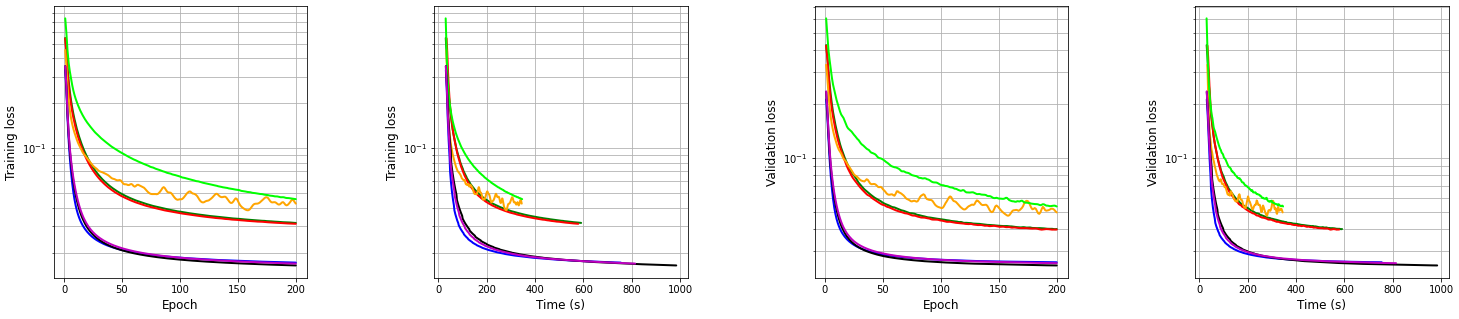}}\\
      
\end{tabular}
\caption{Comparison of optimization performance of different algorithms on each of the 3 problems (CURVES \textbf{top} row, MNIST \textbf{middle} row and FACES \textbf{bottom} row). For each problem, from left to right, \textbf{first} figure displays training loss \emph{vs} epoch, \textbf{second} one represents training loss \emph{vs} time, the \textbf{third} depicts validation loss \emph{vs} epoch and the \textbf{last} displays validation loss \emph{vs} time.}
\label{fig:train}
\end{figure*}

Figure \ref{fig:train} shows the performance of the different optimizers on the three studied problems. The first observation is that in each problem, KFAC as well as our methods optimize the training loss function faster than SGD and ADAM both with respect to epoch and time. 
Although our methods may seem much more computationally expensive than KFAC since at each iteration we perform the power SVD or Lanczos bi-diagonalization to estimate the Fisher matrix, they actually have the same order of magnitude in computational cost as KFAC. See appendix \ref{appendix:costs}  for a comparison of the computational costs.
For each of the three problems, we observe that KFAC and KPSVD perform about the same while the DEFLATION, LANCZOS and KFAC-CORRECTED methods have the ability to optimize the objective function much faster both with respect to epoch and time.

Although this is not our object of study, we observe that for each of the three problems, our proposed methods also maintain a good generalization.

%\subsection{Limitations of our methods}
\begin{comment}
We evaluated our methods on others MLP architectures and obtained good results.
However, when considering CNN architectures, we did not observe any gain in performance compared to KFAC. This can be explained by the fact that IAD is not the only assumption made by KFAC for CNNs \cite{GrosseMartens2016} and therefore steering clear from  this hypothesis alone is insufficient to reach a better performance.
\end{comment}

\section{Conclusion and perspectives}
In this work, we proposed a series of novel Kronecker factorizations to the blocks of the Fisher of multi-layer perceptrons using the Kronecker product Singular Value Decomposition technique. Tests realized on the three standard deep auto-encoder problems showed that 3 out of 4 of our proposed methods (DEFLATION, LANCZOS, KFAC-CORRECT) outperform KFAC both in terms of Fisher approximation quality and in terms of optimization speed of the objective function. This ranking, which goes from the most efficient one to the least efficient one, testifies to the fact that higher-rank approximations yield better results than lower-rank ones.

KFAC as well as our methods use a block-diagonal approximation of FIM where each block corresponds to a layer. This results in ignoring the correlations between the layers. Future works will focus on incorporating cross-layer information, as was attempted by \citet{twolevels} with a two-level KFAC preconditioning approach.  

\newpage
%\addcontentsline{toc}{chapter}{Bibliography}
\bibliography{DataSciences}

\begin{thebibliography}{38}
\providecommand{\natexlab}[1]{#1}
\providecommand{\url}[1]{\texttt{#1}}
\expandafter\ifx\csname urlstyle\endcsname\relax
  \providecommand{\doi}[1]{doi: #1}\else
  \providecommand{\doi}{doi: \begingroup \urlstyle{rm}\Url}\fi

\bibitem[Amari(1998)]{Amari1998}
Amari, S.-I.
\newblock Natural gradient works efficiently in learning.
\newblock \emph{Neural Computation}, 10\penalty0 (2):\penalty0 251--276, 1998.
\newblock \doi{10.1162/089976698300017746}.

\bibitem[Amari \& Nagaoka(2000)Amari and Nagaoka]{AmariNagaoka2000}
Amari, S.-I. and Nagaoka, H.
\newblock \emph{Methods of Information Geometry}, volume 191 of
  \emph{Translations of Mathematical Monographs}.
\newblock American Mathematical Society, Providence, Rhode Island, 2000.
\newblock ISBN 9780821843024.

\bibitem[Ba et~al.(2017)Ba, Grosse, and Martens]{BaGrosseMartens2017}
Ba, J., Grosse, R., and Martens, J.
\newblock Distributed second-order optimization using {K}ronecker-factored
  approximations.
\newblock In \emph{5th International Conference on Learning Representations,
  Conference Track Proceedings}, Toulon, France, 2017.
\newblock URL \url{https://openreview.net/forum?id=SkkTMpjex}.

\bibitem[Becker \& Le~Cun(1988)Becker and Le~Cun]{becker}
Becker, S. and Le~Cun, Y.
\newblock Improving the convergence of back-propagation learning with second
  order methods.
\newblock In Touretzky, D., Hinton, G., and Sejnowski, T. (eds.),
  \emph{Proceedings of the 1988 Connectionist Models Summer School}, pp.\
  29--37. Morgan Kaufman, 1988.

\bibitem[Botev et~al.(2017)Botev, Ritter, and Barber]{BRB2017}
Botev, A., Ritter, H., and Barber, D.
\newblock Practical {G}auss-{N}ewton optimisation for deep learning.
\newblock In Precup, D. and Teh, Y.~W. (eds.), \emph{Proceedings of the 34th
  International Conference on Machine Learning}, volume~70, pp.\  557--565,
  Sydney, Australia, 06--11 Aug 2017. PMLR.
\newblock \doi{10.5555/3305381.3305439}.

\bibitem[Broyden(1970)]{broyden1970}
Broyden, C.~G.
\newblock The convergence of a class of double-rank minimization algorithms 1.
  {G}eneral considerations.
\newblock \emph{IMA Journal of Applied Mathematics}, 6\penalty0 (1):\penalty0
  76--–90, 1970.
\newblock \doi{10.1093/imamat/6.1.76}.

\bibitem[Duchi et~al.(2011)Duchi, Hazan, and Singer]{adagrad}
Duchi, J., Hazan, E., and Singer, Y.
\newblock Adaptive subgradient methods for online learning and stochastic
  optimization.
\newblock \emph{J. Mach. Learn. Res.}, 12\penalty0 (61):\penalty0 2121--2159,
  2011.
\newblock URL \url{http://www.jmlr.org/papers/v12/duchi11a.html}.

\bibitem[Fletcher(1970)]{fletcher1970}
Fletcher, R.
\newblock A new approach to variable metric algorithms.
\newblock \emph{The Computer Journal}, 13\penalty0 (3):\penalty0 317--322,
  1970.
\newblock \doi{10.1093/comjnl/13.3.317}.

\bibitem[George et~al.(2018)George, Laurent, Bouthillier, Ballas, and
  Vincent]{GLBBV2018}
George, T., Laurent, C., Bouthillier, X., Ballas, N., and Vincent, P.
\newblock Fast approximate natural gradient descent in a {K}ronecker factored
  eigenbasis.
\newblock In Bengio, S., Wallach, H., Larochelle, H., Grauman, K.,
  Cesa-Bianchi, N., and Garnett, R. (eds.), \emph{Advances in Neural
  Information Processing Systems 31}, pp.\  9550--9560. Curran Associates,
  Inc., 2018.
\newblock URL
  \url{http://papers.nips.cc/paper/8164-fast-approximate-natural-gradient-descent-in-a-kronecker-factored-eigenbasis.pdf}.

\bibitem[Goldfarb(1970)]{goldfard1970}
Goldfarb, D.
\newblock A family of variable-metric methods derived by variational means.
\newblock \emph{Mathematics of computation}, 24\penalty0 (109):\penalty0
  23--26, 1970.
\newblock \doi{10.1090/S0025-5718-1970-0258249-6}.

\bibitem[Goldfarb et~al.(2020)Goldfarb, Ren, and Bahamou]{GRB2020}
Goldfarb, D., Ren, Y., and Bahamou, A.
\newblock Practical quasi-{N}ewton methods for training deep neural networks,
  2020.
\newblock URL \url{https://arxiv.org/pdf/2006.08877.pdf}.
\newblock arXiv:2006.08877.

\bibitem[Golub \& Kahan(1965)Golub and Kahan]{Golub}
Golub, G. and Kahan, W.
\newblock Calculating the singular values and pseudo-inverse of a matrix.
\newblock \emph{Journal of the Society for Industrial and Applied Mathematics:
  Series B, Numerical Analysis}, 2\penalty0 (2):\penalty0 205--224, 1965.
\newblock \doi{10.1137/0702016}.

\bibitem[Grosse \& Martens(2016)Grosse and Martens]{GrosseMartens2016}
Grosse, R. and Martens, J.
\newblock A {K}ronecker-factored approximate {F}isher matrix for convolution
  layers.
\newblock In \emph{Proceedings of the 33rd International Conference on Machine
  Learning}, volume~48, pp.\  573--582, New York, USA, 2016.
\newblock URL \url{http://proceedings.mlr.press/v48/grosse16.html}.

\bibitem[Heskes(2000)]{Heskes2000}
Heskes, T.
\newblock On ``natural'' learning and pruning in multilayered perceptrons.
\newblock \emph{Neural Computation}, 12\penalty0 (4):\penalty0 881--901, 2000.
\newblock \doi{10.1162/089976600300015637}.

\bibitem[Hinton \& Salakhutdinov(2006)Hinton and Salakhutdinov]{hinton2006}
Hinton, G.~E. and Salakhutdinov, R.~R.
\newblock Reducing the dimensionality of data with neural netwoks.
\newblock \emph{Science}, 313\penalty0 (5786):\penalty0 504--507, 2006.
\newblock \doi{10.1126/science.1127647}.

\bibitem[Kingma \& Ba(2015)Kingma and Ba]{KingmaBa2015}
Kingma, D.~P. and Ba, J.
\newblock Adam: {A} method for stochastic optimization.
\newblock In Bengio, Y. and LeCun, Y. (eds.), \emph{Proceedings of the 3rd
  International Conference on Learning Representations}, San Diego, California,
  2015.
\newblock URL \url{http://arxiv.org/abs/1412.6980}.

\bibitem[Kunstner et~al.(2019)Kunstner, Hennig, and Balles]{KBH2019}
Kunstner, F., Hennig, P., and Balles, L.
\newblock Limitations of the empirical {F}isher approximation for natural
  gradient descent.
\newblock In Wallach, H., Larochelle, H., Beygelzimer, A., Alch\'{e}-Buc, F.,
  Fox, E., and Garnett, R. (eds.), \emph{Advances in Neural Information
  Processing Systems 32}, pp.\  4156--4167. Curran Associates, Inc., 2019.
\newblock URL
  \url{http://papers.nips.cc/paper/8669-limitations-of-the-empirical-fisher-approximation-for-natural-gradient-descent.pdf}.

\bibitem[Le~Roux et~al.(2008)Le~Roux, Manzagol, and Bengio]{LRMB2008}
Le~Roux, N., Manzagol, P.-A., and Bengio, Y.
\newblock Topmoumoute online natural gradient algorithm.
\newblock In \emph{Proceedings of the 20th International Conference on Neural
  Information Processing Systems}, pp.\  849--856, Vancouver, Canada, 2008.
\newblock URL \url{https://dl.acm.org/citation.cfm?id=2981669}.

\bibitem[LeCun(1988)]{LeCun1988}
LeCun, Y.
\newblock A theoretical framework for back-propagation.
\newblock In Touretzky, D., Hinton, G., and Sejnowski, T. (eds.),
  \emph{Proceedings of the 1988 Connectionist Models Summer School}, volume~1,
  pp.\  21--28, Pittsburgh, Philadelphia, 1988.

\bibitem[Liu \& Nocedal(1989)Liu and Nocedal]{nocedal1989}
Liu, D.~C. and Nocedal, J.
\newblock On the limited memory {BFGS} method for large scale optimization.
\newblock \emph{Mathematical Programming}, 45\penalty0 (1-3):\penalty0
  503–528, 1989.

\bibitem[Martens(2010)]{martens2010}
Martens, J.
\newblock Deep learning via {H}essian-free optimization.
\newblock In \emph{Proceedings of the 27th International Conference on Machine
  Learning (ICML)}, volume~27, pp.\  735--742, Haifa, Israel, 2010.
\newblock URL
  \url{http://www.cs.toronto.edu/~jmartens/docs/Deep_HessianFree.pdf}.

\bibitem[Martens(2014)]{Martens2014}
Martens, J.
\newblock New insights and perspectives on the natural gradient method.
\newblock arXiv:1412.1193, 2014.
\newblock URL \url{https://arxiv.org/abs/1412.1193}.

\bibitem[Martens \& Grosse(2015)Martens and Grosse]{MartensGrosse2015}
Martens, J. and Grosse, R.
\newblock Optimizing neural networks with {K}ronecker-factored approximate
  curvature.
\newblock In \emph{Proceedings of the 32nd International Conference on Machine
  Learning}, volume~37, pp.\  2408--2417, Lille, France, 2015.
\newblock URL \url{http://proceedings.mlr.press/v37/martens15.html}.

\bibitem[Martens et~al.(2018)Martens, Ba, and Johnson]{MBJ2018}
Martens, J., Ba, J., and Johnson, M.
\newblock Kronecker-factored curvature approximations for recurrent neural
  networks.
\newblock In \emph{Proceedings of the 6th International Conference on Learning
  Representations}, Vancouver, Canada, 2018.
\newblock URL \url{https://openreview.net/forum?id=HyMTkQZAb}.

\bibitem[Nocedal \& Wright(2006)Nocedal and Wright]{NocedalWright06}
Nocedal, J. and Wright, S.~J.
\newblock \emph{Numerical Optimization}.
\newblock Springer Series in Operations Research and Financial Engineering.
  Springer, New York, 2006.
\newblock ISBN 9780387227429.

\bibitem[Ollivier(2015)]{olivier2015}
Ollivier, Y.
\newblock Riemannian metrics for neural networks {I}: feedforward networks.
\newblock \emph{Information and Inference: A Journal of the IMA}, 4\penalty0
  (2):\penalty0 108--153, 2015.
\newblock \doi{10.1093/imaiai/iav006}.

\bibitem[Pascanu \& Bengio(2013)Pascanu and Bengio]{PascanuBengio2013}
Pascanu, R. and Bengio, Y.
\newblock Revisiting natural gradient for deep networks.
\newblock arXiv:1301.3584, 2013.
\newblock URL \url{https://arxiv.org/abs/1301.3584v4}.

\bibitem[Paszke et~al.(2019)Paszke, Gross, Massa, Lerer, Bradbury, Chanan,
  Killeen, Lin, Gimelshein, Antiga, Desmaison, Kopf, Yang, DeVito, Raison,
  Tejani, Chilamkurthy, Steiner, Fang, Bai, and Chintala]{pytorch}
Paszke, A., Gross, S., Massa, F., Lerer, A., Bradbury, J., Chanan, G., Killeen,
  T., Lin, Z., Gimelshein, N., Antiga, L., Desmaison, A., Kopf, A., Yang, E.,
  DeVito, Z., Raison, M., Tejani, A., Chilamkurthy, S., Steiner, B., Fang, L.,
  Bai, J., and Chintala, S.
\newblock Py{T}orch: An imperative style, high-performance deep learning
  library.
\newblock In Wallach, H., Larochelle, H., Beygelzimer, A., d\textquotesingle
  Alch\'{e}-Buc, F., Fox, E., and Garnett, R. (eds.), \emph{Advances in neural
  information processing systems}, volume~32, pp.\  8026–8037. Curran
  Associates, Inc., 2019.
\newblock URL
  \url{https://proceedings.neurips.cc/paper/2019/file/bdbca288fee7f92f2bfa9f7012727740-Paper.pdf}.

\bibitem[Povey et~al.(2014)Povey, Zhang, and Khudanpur]{PZK2014}
Povey, D., Zhang, X., and Khudanpur, S.
\newblock Parallel training of {DNN}s with natural gradient and parameter
  averaging.
\newblock rXiv:1410.7455, 2014.
\newblock URL \url{https://arxiv.org/abs/1410.7455}.

\bibitem[Robbins \& Monro(1951)Robbins and Monro]{RobbinsMonro1951}
Robbins, H. and Monro, S.
\newblock A stochastic approximation method.
\newblock \emph{Ann. Math. Statist.}, 22\penalty0 (3):\penalty0 400--407, 1951.
\newblock URL \url{https://www.jstor.org/stable/2236626}.

\bibitem[Saad(2011)]{Saad}
Saad, Y.
\newblock \emph{Numerical Methods for Large Eigenvalue Problems}, volume~6.
\newblock Society for Industrial and Applied Mathematics, Philadelphia, 2011.

\bibitem[Schraudolph(2002)]{Schraudolph2002}
Schraudolph, N.~N.
\newblock Fast curvature matrix-vector products for second-order gradient
  descent.
\newblock \emph{Neural Computation}, 14\penalty0 (7):\penalty0 1723--1738,
  2002.
\newblock \doi{10.1162/08997660260028683}.

\bibitem[Shanno(1970)]{shanno1970}
Shanno, D.~F.
\newblock Conditioning of quasi-{N}ewton methods for function minimization.
\newblock \emph{Mathematics of computation}, 24\penalty0 (111):\penalty0
  647–656, 1970.
\newblock \doi{10.1090/S0025-5718-1970-0274029-X}.

\bibitem[Sutskever et~al.(2013)Sutskever, Martens, Dahl, and
  Hinton]{sluskeretal2013}
Sutskever, I., Martens, J., Dahl, G., and Hinton, G.
\newblock On the importance of initialization and momentum in deep learning.
\newblock In Dasgupta, S. and McAllester, D. (eds.), \emph{Proceedings of the
  30th International Conference on Machine Learning}, volume~28 of
  \emph{Proceedings of Machine Learning Research}, pp.\  1139--1147, Atlanta,
  Georgia, USA, 17--19 Jun 2013. PMLR.
\newblock URL \url{https://proceedings.mlr.press/v28/sutskever13.html}.

\bibitem[Tieleman \& Hinton(2012)Tieleman and Hinton]{rmsprop}
Tieleman, T. and Hinton, G.
\newblock Lecture 6.5 {RMSP}rop: Divide the gradient by a running average of
  its recent magnitude.
\newblock \emph{COURSERA: Neural Networks for Machine Learning}, 4\penalty0
  (2):\penalty0 26--31, 2012.

\bibitem[Tselepidis et~al.(2020)Tselepidis, Kohler, and Orvieto]{twolevels}
Tselepidis, N., Kohler, J., and Orvieto, A.
\newblock Two-level {K-FAC} preconditioning for deep learning.
\newblock In \emph{OPT2020: 12th Annual Workshop on Optimization for Machine
  Learning}, 2020.
\newblock URL \url{https://arxiv.org/abs/2011.00573}.

\bibitem[van Loan(2000)]{VanLoan2000}
van Loan, C.~F.
\newblock The ubiquitous {K}ronecker product.
\newblock \emph{J. Comput. Appl. Math.}, 123\penalty0 (1):\penalty0 85--100,
  2000.
\newblock \doi{10.1016/S0377-0427(00)00393-9}.

\bibitem[Van~Loan \& Pitsianis(1993)Van~Loan and
  Pitsianis]{VanLoanPitsianis1993}
Van~Loan, C.~F. and Pitsianis, N.
\newblock Approximation with {K}ronecker products.
\newblock In Moonen, M.~S., Golub, G.~H., and De~Moor, B. L.~R. (eds.),
  \emph{Linear Algebra for Large Scale and Real-Time Applications}, volume 232
  of \emph{NATO ASI Series}, Dordrecht, 1993.
\newblock \doi{10.1007/978-94-015-8196-7_17}.

\end{thebibliography}
\bibliographystyle{bib}

\appendix

\section{Proofs} \label{appendix:proof}

\subsection{Proof of Theorem \ref{th:th1}}\label{appendix:th1}
\begin{proof}
We are going to derive the identity 
\begin{equation}\label{eq:targetid}
\|F_{i,i} - R\otimes S\|_F = \|\rearrange(F_{i,i}) - \text{vec}(R)\text{vec}(S)^T\|_F
\end{equation}
for all $R\in\mathbb{R}^{(d_{i-1}+1)\times (d_{i-1}+1)}$ and $S\in\mathbb{R}^{d_i\times d_i}$, from which Theorem \ref{th:th1} will follow. For notational convenience, let
$$
M=F_{i,i}, \qquad d= d_{i-1}+1, \qquad d'= d_i.
$$
We recall that $M$ has the block structure
$$
M =\begin{bmatrix} 
    M_{1,1} & \dots & M_{1,d} \\
     M_{2,1} & \dots & M_{2,d}\\
    \vdots  & & \vdots \\
     M_{d,1}  & \dots & M_{d,d} 
    \end{bmatrix} \in \mathbb{R}^{d'd\times d'd} ,
$$
where each block $M_{\mu,\nu}$, $(\mu,\nu)\in \{1,\ldots, d\}^2$, is of size $d' \times d'$. By definition of the Frobenius norm,
\begin{align}
\|M -R\otimes S\|_{F}^2 & = \sum_{\mu=1}^d\sum_{\nu=1}^d \|M_{\mu,\nu} - R_{\mu,\nu}S\|_F^2\nonumber\\
                       & = \sum_{\mu=1}^d\sum_{\nu=1}^d \|\text{vec}(M_{\mu,\nu}) - R_{\mu,\nu}\text{vec}(S)\|_2^2\nonumber\\
                       & = \sum_{\mu=1}^d\sum_{\nu=1}^d\|\text{vec}(M_{\mu,\nu})^T -R_{\mu,\nu}\text{vec}(S)^T\|_2^2,\label{eq:almostthere}
\end{align}
where $R_{\mu,\nu}$ is the $(\mu,\nu)$-scalar entry of $R$ and $\|\cdot\|_2$ denotes the Euclidean norm. By virtue of                      
$$
\rearrange(M) = \begin{bmatrix}
    \text{vec}(M_{1,1})^T\\
    \vdots\\
    \text{vec}(M_{d,1})^T\\
    \vdots\\
    \text{vec}(M_{1,d})^T\\
    \vdots\\
    \, \text{vec}(M_{d,d})^T\,\\
    \end{bmatrix},
\qquad
\text{vec}(R)\text{vec}(S)^T = \begin{bmatrix}
    R_{1,1}\text{vec}(S)^T\\
    \vdots\\
    R_{d,1}\text{vec}(S)^T\\
    \vdots\\
    R_{1,d}\text{vec}(S)^T\\
    \vdots\\
    R_{d,d}\text{vec}(S)^T\\
    \end{bmatrix},
$$
the last equality of \eqref{eq:almostthere} also reads $\|M - R\otimes S\|^2_F = \|\rearrange(M) - \text{vec}(R)\text{vec}(S)^T\|^2_F$, which proves \eqref{eq:targetid}.
%    $$
%    \begin{aligned}
%    \|A-R\otimes S\|_{F}^2 = & \sum_{i=1}^M\sum_{j=1}^N\|A_{i,j}-[R]_{i,j}S\|_F^2
%                        =   \sum_{i=1}^M\sum_{j=1}^N\|\text{vec}(A_{i,j})-[R]_{i,j}\text{vec}(S)\|_2^2
%                        = \sum_{i=1}^M\sum_{j=1}^N\|\text{vec}(A_{i,j})^T-[R]_{i,j}\text{vec}(S)^T\|_2^2\\
%                        = & \|\rearrange\left(A\right)-\text{vec}\left(R\right)\text{vec}\left(S\right)^T\|_{F}^2
%    \end{aligned}
%    $$
%    
%Therefore: $$\argmin_{R,S} \|A-R\otimes S\|_{F}=\argmin_{R,S} \|\rearrange\left(A\right)-\text{vec}\left(R\right)\text{vec}\left(S\right)^T\|_{F}$$     
\end{proof}

\subsection{Proof of Proposition \ref{prop:1}} \label{appendix:prop1}
\begin{proof}
Using the shorthand notations
$$
\Aat = \Bar{a}_{i-1}\Bar{a}_{i-1}^T, 
\qquad
\Ggt = g_i g_i^T , 
\qquad
d= d_{i-1}+1, \qquad d'= d_i
$$    
we have
$$
F_{i,i}= \mathbb{E}[\Aat \otimes \Ggt] 
     =  \mathbb{E}\left(\begin{bmatrix} 
    \Aat_{1,1}\Ggt & \dots & \Aat_{1,d}\Ggt\\
    \vdots &  & \vdots \\
    \Aat_{d,1}\Ggt & \dots &  \Aat_{d,d}\Ggt
    \end{bmatrix}\right) \in \mathbb{R}^{d' d \times d'd} .
$$
Hence,
$$
\rearrange(F_{i,i})=
     \mathbb{E}\left(
     \begin{bmatrix}
    \text{vec}(\Aat_{1,1}\Ggt)^T\\
    \vdots\\
    \text{vec}(\Aat_{d,1}\Ggt)^T\\
    \vdots\\
     \text{vec}(\Aat_{1,d}\Ggt)^T\\
      \vdots\\
       \text{vec}(\Aat_{d,d}\Ggt)^T\\
    \end{bmatrix}\right) \in  \mathbb{R}^{d^2 \times (d')^2}
$$
For all $v \in \mathbb{R}^{(d')^2}$,
$$
\rearrange(F_{i,i})v =  \mathbb{E}\left(
     \begin{bmatrix}
    \text{vec}(\Aat_{1,1}\Ggt)^T\\
    \vdots\\
    \text{vec}(\Aat_{d,1}\Ggt)^T\\
    \vdots\\
     \text{vec}(\Aat_{1,d}\Ggt)^T\\
      \vdots\\
       \text{vec}(\Aat_{d,d}\Ggt)^T\\
    \end{bmatrix}\right) v 
    =  \mathbb{E}\left(
     \begin{bmatrix}
    \Aat_{1,1}\text{vec}(\Ggt)^Tv\\
    \vdots\\
    \Aat_{d,1}\text{vec}(\Ggt)^Tv\\
    \vdots\\
     \Aat_{1,d}\text{vec}(\Ggt)^Tv\\
      \vdots\\
      \Aat_{d,d} \text{vec}(\Ggt)^Tv
    \end{bmatrix}\right) 
    =  \mathbb{E}[\,(\text{vec}(\Ggt)^Tv)\,\text{vec}(\Aat)] .
$$
The scalar quantity $\text{vec}(\Ggt)^T v$ can be further detailed as
$$
\text{vec}(\Ggt)^T v = (\text{vec}(g_i g_i^T))^T v = (g_i \otimes g_i)^T v = (g_i^T \otimes g_i^T) \,\text{vec}(\text{MAT}(v)), 
$$
owing to the identities $\text{vec}(xy^T) = y\otimes x$ and $(A\otimes B)^T = A^T \otimes B^T$. Invoking now $(A\otimes B)\, \text{vec}(X)= \text{vec}(BXA^T)$, we end up with
$$
\text{vec}(\Ggt)^T v = \text{vec} (g_i^T \,\text{MAT}(v) \, g_i) = \text{vec} (g_i^T V  g_i).
$$
Therefore, $\rearrange(F_{i,i})v = \mathbb{E}[(g_i^T V g_i) \, \text{vec}(\Aat)]$. The proof of
$
\rearrange(F_{i,i})^Tu = \mathbb{E}[(\Bar{a}_{i-1}^TU\Bar{a}_{i-1}) \,\text{vec}(\mathcal{G}_i)]
$ for all  $u \in \mathbb{R}^{d^2}$ goes along the same lines.
\end{proof}

\subsection{Proof of Proposition \ref{prop:2}} \label{appendix:prop2}
\begin{proof}
$\;$\\
$\rhd$ \textit{Symmetry}. By construction and up to a choice of sign, 
$$
\text{vec}(\Bar{A}_{i-1}^{\text{KPSVD}}) = \sqrt{\sigma_1}u_1,
\qquad
\text{vec}(G_i^{\text{KPSVD}}) = \sqrt{\sigma_1}v_1,
$$ 
where $\sigma_1$  is the largest singular value of $\rearrange(F_{i,i})$ associated with left and right singular vectors $(u_1, v_1)$. From the standard SVD properties 
$$
\rearrange(F_{i,i})v_1=\sigma_1 u_1, \qquad \rearrange(F_{i,i})^T u_1=\sigma_1 v_1,
$$
we infer that
$$
\sqrt{\sigma_1} \text{vec}(\Bar{A}_{i-1}^{\text{KPSVD}}) = \rearrange(F_{i,i})v_1  =  \mathbb{E}[\,(g_i^T \text{MAT}(v_1)g_i) \, \text{vec} (\Bar{a}_{i-1} \Bar{a}_{i-1}^T) \, ],
$$
the last equality being a consequence of Proposition \ref{prop:1}. The scalar quantity $g_i^T \text{MAT}(v_1)g_i$ can be moved into the argument of the ``vec'' operator, after which we can permute $\mathbb{E}$ and ``vec'' to obtain
$$
\sqrt{\sigma_1} \text{vec}(\Bar{A}_{i-1}^{\text{KPSVD}}) 
= \mathbb{E}[ \,\text{vec} ( (g_i^T \text{MAT}(v_1)g_i) \, \Bar{a}_{i-1} \Bar{a}_{i-1}^T) \, ]
= \text{vec}(\mathbb{E}[ \, (g_i^T \text{MAT}(v_1)g_i) \, \Bar{a}_{i-1} \Bar{a}_{i-1}^T \, ]).
$$
Hence, upon taking the ``MAT'' operator,
$$
\sqrt{\sigma_1} \Bar{A}_{i-1}^{\text{KPSVD}}
= \mathbb{E}[ \, (g_i^T \text{MAT}(v_1)g_i) \, \Bar{a}_{i-1} \Bar{a}_{i-1}^T \, ].
$$
Since each $(g_i^T \text{MAT}(v_1)g_i) \,\Bar{a}_{i-1} \Bar{a}_{i-1}^T$ is a symmetric matrix, their expectation is also symmetric. The symmetry of $G_i^{\text{KPSVD}}$ is proven in a similar fashion.
    
\noindent $\rhd$ \textit{Positive and semi-definiteness}. The proof of this part is inspired from {\em Theorem 5.8} in \cite{VanLoanPitsianis1993}. Since $\Bar{A}_{i-1}^{\text{KPSVD}}$ and $G_i^{\text{KPSVD}}$ are symmetric, they can be diagonalized as 
\begin{alignat*}{2}
\Bar{A}_{i-1}^{\text{KPSVD}} & = U^T D U, & \qquad D & = \text{diag}(\alpha_1,\alpha_2,\hdots,\alpha_{d_{i-1}+1}),\\
G_i^{\text{KPSVD}} & = V^T E V, & \qquad E & = \text{diag}(\beta_1,\beta_2,\hdots,\beta_{d_i}),
\end{alignat*}
with orthogonal matrices $U$ and $V$. 
%Because $\Bar{a}_{i-1} \Bar{a}_{i-1}^T$ is positive semi-definite, all the $\alpha$'s must have the same sign (as $g_i^T \text{MAT}(v_1)g_i$). Likewise, all the $\beta$'s must have the same sign. 
We are going to show that it is possible to modify the matrices, while preserving minimality of the Frobenius norm, so that the $\alpha$'s and the $\beta$'s all have the same sign. To this end, we first observe that
\[
\Bar{A}_{i-1}^{\text{KPSVD}} \otimes G_i^{\text{KPSVD}} = (U^T D U) \otimes (V^T E V)
= (U\otimes V)^T (D\otimes E) (U\otimes V),
\]
which leads us to introduce
\[
C = (U\otimes V) F_{i,i} \,(U\otimes V)^T.
\]
By unitary invariance of the Frobenius norm, we have
\[
 \|F_{i,i}- \Bar{A}_{i-1}^{\text{KPSVD}} \otimes G_i^{\text{KPSVD}}\|_F^2 =  \| (U\otimes V)^T (C - D \otimes E ) (U\otimes V) \|_F^2
   =  \|C - D \otimes E\|_F^2 .
\]
The last quantity can be expressed as
\[
\|C - D \otimes E\|_F^2 = \sum_{\omega = 1}^{d_i(d_{i-1}+1)} \!\! (C_{\omega,\omega} - (D\otimes E)_\omega)^2 + \sum_{\xi\neq \eta} C_{\xi,\eta}^2
= \sum_{\omega = 1}^{d_i(d_{i-1}+1)} \!\! (C_{\omega,\omega} - \alpha_{\mu(\omega)} \beta_{\tau(\omega)})^2 + \sum_{\xi\neq \eta} C_{\xi,\eta}^2 ,
\]
where $\mu(\omega)\in \{1,\ldots, d_{i-1}+1\}$ and $\tau(\omega)\in \{1,\ldots , d_i\}$ can be uniquely determined\footnote{The solution is given by $\mu(\omega)= 1+ \lfloor (\omega-1)/d_i\rfloor$ and $\tau(\omega) = d_i \lfloor (\omega-1)/d_i\rfloor$, where $\lfloor\cdot\rfloor$ is the integer part, but this does not matter here.} from $\omega \in \{ 1, \ldots, d_i (d_{i-1}+1) \}$ in such a way that $\omega = (\mu(\omega)-1) d_i + \tau(\omega)$.
%Since the $\alpha$'s have the same sign and the $\beta$'s have the same sign, the $\alpha_{\mu(\omega)} \beta_{\tau(\omega)}$'s appearing in the above equality must all have the same sign. On the other hand,

Because $F_{i,i}$ is positive semi-definite, $C$ is also positive semi-definite, which implies that $C_{\omega,\omega}\geq 0$.
%If $\alpha_{\mu(\omega)}\beta_{\tau(\omega)} \geq 0$ for all $\omega$, we have what we claim. Assume that $\alpha_{\mu(\omega)}\beta_{\tau(\omega)} \leq 0$ for all $\omega$. Then, it is readily checked that for all $\omega$,
Thus, for all $\omega$,
\[
(C_{\omega,\omega} - \alpha_{\mu(\omega)}\beta_{\tau(\omega)} )^2 - (C_{\omega,\omega} - |\alpha_{\mu(\omega)}| |\beta_{\tau(\omega)}|)^2 =
2 C_{\omega,\omega} (|\alpha_{\mu(\omega)} \beta_{\tau(\omega)}| - \alpha_{\mu(\omega)} \beta_{\tau(\omega)} ) \geq 0.
\]
This means that if we set, for instance,
\[
R = U^T |D| U , \qquad
S = V^T |E| V ,
\]
with $|D| = \text{diag}(|\alpha_1|,\alpha_2|,\ldots,|\alpha_{d_{i-1}+1}|)$ and $|E| = \text{diag}(|\beta_1|,|\beta_2|,\ldots,|\beta_{d_i}|)$, then
\[
\|F_{i,i} - R\otimes S\|_F^2 \leq \|F_{i,i}- \Bar{A}_{i-1}^{\text{KPSVD}} \otimes G_i^{\text{KPSVD}}\|_F^2 .
\] 
If the inequality were strict, minimality of $(\Bar{A}_{i-1}^{\text{KPSVD}} , G_i^{\text{KPSVD}})$ would be contradicted. Therefore, we must have equality. This entails that $(R,S)$ is another minimizer for which the eigenvalues of $R$, as well as those of $S$, are all non-negative. In such a case, we select this pair $(R,S)$ for the factors $(\Bar{A}_{i-1}^{\text{KPSVD}} , G_i^{\text{KPSVD}})$. 
%This procedure allows us to assume that the $\alpha$'s and the $\beta$'s all have the same sign. In other words, either both matrices are positive semi-definite or both of them are negative semi-definite. Since
%$$
%\Bar{A}_{i-1}^{\text{KPSVD}} \otimes G_i^{\text{KPSVD}} = (-\Bar{A}_{i-1}^{\text{KPSVD}}) \otimes (-G_i^{\text{KPSVD}}),
%$$
%we have the freedom to choose the sign so that both matrices are positive semi-definite.
\end{proof}

\section{Algorithms}\label{appendix:algo}
\subsection{Power SVD algorithm}\label{appendix:svd}

Algorithm to compute the dominant singular value $\sigma_1=\sigma_{\max}$ of a real rectangular matrix and associated right and left singular vectors.

\begin{algorithm}[H]
\SetAlgoLined
\KwInput{$A \in \mathbb{R}^{m\times n}$, $v^{(0)} \in \mathbb{R}^m$, $\epsilon$ (precision), $k_{\max}$ (maximum iteration).}
\KwOutput{ $\sigma_1$, $u_1$ and $v_1$ ($Av_1=\sigma_1u_1$ , $A^Tu_1=\sigma_1v_1$}
 \For{$k=1,2,\hdots,k_{\max}$}{
$w^{(k)} =Av^{(k-1)}$ ;  $u^{(k)}= w^{(k)} /\|w^{(k)}\|_2$\;
  $z^{(k)}=A^Tu^{(k)}$ ; $v^{(k)}= z^{(k)} /\|z^{(k)}\|_2$\;
 $\sigma^{(k)}=\|z^{(k)}\|_2$\;
 $\mathtt{error} = \|Av^{(k)}-\sigma^{(k)}u^{(k)}\|_2$\;
    
  \If{$\mathtt{error}\leq\epsilon$}{
   Break\;
  }

 }
 \caption{SVD Power algorithm}
\end{algorithm}

\subsection{Lanczos bidiagonalization algorithm}\label{appendix:lanczos}
Let $A\in \mathbb{R}^{m\times n}$ be the input matrix. To build a rank-$k$ approximation
\begin{equation}\label{eq:SVDapprox}
A\approx A_K = U_K \Sigma_K V_K^T
\end{equation}
of $A$ \cite{Golub}, where $\Sigma_K \in \mathbb{R}^{K\times K}$ is a diagonal matrix and $U_K\in \mathbb{R}^{n\times K}$, $V_K \in\mathbb{R}^{m\times K}$ are rectangular passage matrices, we first apply Algorithm \ref{algo:lanczos} to obtain the outputs $P \in \mathbb{R}^{n\times K}$, $Q  \in \mathbb{R}^{m\times K}$, $H \in \mathbb{R}^{K\times K}$. The matrix $H$ represents a truncated version of $A$ in another basis associated with $P, Q$. The key observation here is that $H$ is of small size, therefore its SVD
\[
H =  X_K \Sigma_K Y_K^T = \sum_{i=1}^K \sigma_ix_iy_i^T,
\]
with
\[\sigma_1\geq \sigma_2 \geq \hdots \geq \sigma_K,
\qquad X_K\in\mathbb{R}^{K\times K},
\qquad Y_K\in\mathbb{R}^{K\times K},
\]
is not expensive to compute. Going back to the initial basis by the left and right multiplications
\[
A_K  := PHQ^T = PX_K \Sigma_K Y_K^T Q^T ,
\]
we end up with the desired approximation \eqref{eq:SVDapprox} by noticing that
\[
U_K = P X_K, \qquad V_K = Q Y_K . 
\]

\begin{algorithm}[H]\label{algo:lanczos}
\SetAlgoLined
\KwInput{$A \in \mathbb{R}^{m\times n}$, $q^{(0)} \in \mathbb{R}^m, \|q^{(0)}\|=1$, $K$ (dimension of Krylov subspace), $\epsilon$ (precision)}
\KwOutput{Matrices $P \in \mathbb{R}^{n\times K}$, $Q  \in \mathbb{R}^{m\times K}$, $H \in \mathbb{R}^{K\times K}$ and $P \in \mathbb{R}^{K\times K}$}
{\bfseries Start:} \\$w^{(0)}=Aq^{(0)}$\\ $\alpha^{(0)}=\|w^{(0)}\|$\\ $p^{(0)}=w^{(0)} /\alpha^{(0)}$\\ $H[0,0]=\alpha_0$\\ $P[:,0]=p^{(0)}$\\ $Q[:,0]=q^{(0)}$ \\
 \For{$k=0,1,\hdots, K-1$}{
  $z^{(k)}=A^Tp^{(k)}-\alpha^{(k)}q^{(k)}$\\
  $\beta^{(k)}=\|z^{(k)}\|$\\
   \eIf{$\beta^{(k)}\leq\epsilon$}
    {Break}
   {
     $q^{(k+1)}= z^{(k)} /\beta^{(k)}$\;
  $w^{(k+1)}=Aq^{(k+1)}-\beta^{(k)}p^{(k)}$\;
  $\alpha^{(k+1)}=\|w^{(k+1)}\|$\;
  $p^{(k+1)}=w^{(k+1)} /\alpha^{(k+1)}$\;
  $H[k+1,k+1]=\alpha^{(k+1)}$\;
 $H[k,k+1]=\beta^{(k)}$\;
  $P[:,k+1]=p^{(k+1)}$\;
  $Q[:,k+1]=q^{(k+1)}$\;
   }
  
 }
 \caption{Lanczos bidiagonalization algorithm}
\end{algorithm}

\section{Computational costs}\label{appendix:costs}
Here we estimate the computation costs required to compute $\hat{F}$ (estimate of $F$), $\hat{F}^{-1}$ and $\hat{F}^{-1}\nabla h$ of our proposed methods compared to KFAC.
We recall that here $d$ denotes the number of neurons in each layer, $\ell$ denotes the number of network layers and $m$ the mini-batch size. Table \ref{tab:costs} summarizes orders of computational costs required by each method. We did not include forwards and backwards/additional backwards costs as they are the same for all methods. 
$K$ is the dimension of Krylov subspace in Lanczos bi-diagonalization algorithm (see \S\ref{appendix:lanczos}). $k_1$ and $k_2$ represent the number of iterations at which the corresponding algorithm has converged (power SVD or Lanczos bi-diagonalization algorithm). In our experiments, we found that they are of the order of tens. As for $c_1$ and $c_2$ they denote implementation constants.

As we can see in Table \ref{tab:costs}, our proposed methods are of the same order of magnitude as KFAC in terms of computation costs.
\begin{table}[H]
\caption{Range of the computational costs per update.}
\label{tab:costs}
\vskip 0.15in
\begin{center}
\begin{small}
\begin{sc}
\begin{tabular}{lllll}
\toprule
  & $\hat{F}$ & $\hat{F}^{-1}$ & $\hat{F}^{-1}\nabla h$\\
\midrule
KFAC    & $2\ell m d^2$ & $2\ell d^3$& $2\ell d^3$ \\
KPSVD & $4k_1\ell md^2 $& $2\ell d^3$ & $2\ell d^3$\\
Deflation    & $4k_1\ell md^2+4k_1\ell md^2$ & $c_1\ell d^3$& $c_2\ell d^3$ \\
Lanczos   & $4k_2\ell md^2$ $+\ell K^3+2\ell Kd^2$ & $c_1\ell d^3$& $c_2\ell d^3$ \\ 
KFAC-corrected    & $2\ell m d^2+4k_1\ell md^2$ & $c_1\ell d^3$& $c_2\ell d^3$ \\
\bottomrule
\end{tabular}
\end{sc}
\end{small}
\end{center}
\vskip -0.1in
\end{table}

\subsection*{Explanation of the entries of Table \ref{tab:costs}}
\begin{itemize}
    \item \textbf{KFAC}: To compute $\hat{F}$, we need to compute $2\ell$ terms $\Bar{A}_{i-1}=\mathbb{E}[\Bar{a}_{i-1}\Bar{a}_{i-1}^T]$ and $G_i = \mathbb{E}[g_ig_i^T]$ of computational costs $O(md^2)$ each. For $\hat{F}^{-1}$, the inverses of the $\ell$ pairs $\Bar{A}_{i-1}$ and $G_i$ are required. The computational cost of each $\Bar{A}_{i-1}^{-1}$ or $G_i^{-1}$ is $O(d^3)$. As for $\hat{F}^{-1}\nabla h$, we need to perform $\ell$ matrix-matrix multiplications $G_i^{-1}\nabla_WhA_{i-1}^{-1}$ (see equation (\ref{eq:precond})).
    
    \item \textbf{KPSVD}: The computation of $\hat{F}$ requires to apply the power SVD algorithm. If $k_1$ is the iteration number of convergence, then for each layer $i$, we need to perform $k_1$ matrix-vector multiplications $\rearrange(F_{i,i})v=\mathbb{E}[(g_i^TVg_i)\text{vec}(\Bar{a}_{i-1}\Bar{a}_{i-1}^T)] $ and $\rearrange(F_{i,i})^Tu=\mathbb{E}[(\Bar{a}_{i-1}^TU\Bar{a}_{i-1})\text{vec}(g_ig_i^T)]$. The computational cost of $\rearrange(F_{i,i})v$ or $\rearrange(F_{i,i})^Tu$ is $O(md^2)$. The computational costs required for $\hat{F}^{-1}$ and $\hat{F}^{-1}\nabla h$ are the same as in KFAC.

    \item \textbf{KFAC-CORRECTED}: The computation of $\hat{F}$ is a combination of the computation of $\hat{F}$ in KFAC and in KPSVD so the complexity is the sum of the complexity in KFAC and KPSVD. As for $\hat{F}^{-1}$ and $\hat{F}^{-1}\nabla h$ the technique described in subsection \ref{sec:inversion} is used and the complexities are $O(c_1\ell d^3)$ for  $\hat{F}^{-1}$ (SVD and matrix-matrix multiplications) and $O(c_2\ell d^3)$ for $\hat{F}^{-1}\nabla h$ (matrix-matrix multiplications).
    
    \item \textbf{DEFLATION}: To compute $\hat{F}$ for a single layer, we have applied twice the power SVD algorithm and each application has the same cost as in KPSVD.  So the total computational cost of computing $\hat{F}$ in DEFLATION is twice the total computational cost of computing $\hat{F}$ in KPSVD. The computational costs required for $\hat{F}^{-1}$ and $\hat{F}^{-1}\nabla h$ are the same as in KFAC-CORRECTED.
    
    \item \textbf{LANCZOS}: To compute $\hat{F}$, the Lanczos bi-diagonalization algorithm is applied for each layer. Like in KPSVD, if $k_2$ is the iteration number of convergence then $k_2$ $\rearrange(F_{i,i})v$ and $\rearrange(F_{i,i})^Tu$ (in $O(md^2)$ each) were necessary for each layer. At the end Lancozs of the bi-diagonalization algorithm, we need to perform for each layer, the SVD of matrix $H \in \mathbb{R}^{K\times K}$ (in $O(K^3)$) and matrix-matrix operations $PX_k$ (in $O(Kd^2)$) and $QY_k$ (in $O(Kd^2)$).The computational costs required for $\hat{F}^{-1}$ and $\hat{F}^{-1}\nabla h$ are the same as in DEFLATION or KFAC-CORRECTED.
    
\end{itemize}

\section{Network architectures and Datasets} \label{appendix:data}
We describe here the datasets and network architectures \cite{hinton2006} used in our tests.

\begin{itemize}
\item \textbf{Auto-encoder problem 1}
    
        \begin{itemize}
            \item Network architecture: $784-1000-500-250-30-250-500-1000-784$
            \item Activations functions: $\text{ReLU}-\text{ReLU}-\text{ReLU}-\text{ReLU}-\text{ReLU}-\text{ReLU}-\text{ReLU}-\text{Sigmoid}$
            \item Data : MNIST (images of shape $28\times 28$ of handwritten digits. $50000$ training images and $10000$ validation images). 
            \item Loss function: binary cross entropy
        \end{itemize}
        
     \item \textbf{Auto-encoder problem 2}   
        \begin{itemize}
           \item Network architecture: $625-2000-1000-500-30-500-1000-2000-625$
           \item Activation functions: $\text{ReLU}-\text{ReLU}-\text{ReLU}-\text{ReLU}-\text{ReLU}-\text{ReLU}-\text{ReLU}-\text{Linear}$
            \item Data : FACES (images of shape $25\times 25$ people. $82800$ training images and $20700$ validation images). 
            \item Loss function: mean square error.
        \end{itemize}

\item \textbf{Auto-encoder problem 3}
    \begin{itemize}
        \item Network architecture: $784-400-200-100-50-25-6-25-50-100-200-400-784$
        \item Activations functions: $\text{ReLU}-\text{ReLU}-\text{ReLU}-\text{ReLU}-\text{ReLU}-\text{ReLU}-\text{ReLU}-\text{ReLU}-\text{ReLU}-\text{ReLU}-\text{ReLU}-\text{Sigmoid}$
            \item Data : CURVES (images of shape $28\times 28$ of simulated handdrawn
    curves. $16000$ training images and $4000$ validation images). 
            \item Loss function: binary cross entropy.
    \end{itemize}

\end{itemize}

\section{Gradient clipping} \label{appendix:clipping}
We applied the KL-clipping technique \cite{BaGrosseMartens2017}: after preconditioning the gradients, we scaled them by a factor $\nu$ given by
$$
\nu = \min\bigg(1,\sqrt{\frac{c}{\sum_{i=1}^{\ell}|\mathcal{G}_i^T\nabla h(W_i)|}}\bigg) ,
$$
where $\mathcal{G}_i$ denotes the preconditioned gradient and $c$ is a constant that represents the maximum clipping parameter.

\end{document}